\newenvironment{hproof}{%
  \proof}{\endproof}
\theoremstyle{plain}
\newtheorem{theorem}{Theorem}[section]
\newtheorem{lemma}[theorem]{Lemma}
\theoremstyle{definition}
\theoremstyle{remark}
\newcommand{\figtop}{{\em (Top)}}
\newcommand{\figbottom}{{\em (Bottom)}}
\def\eqref#1{equation~\ref{#1}}
\def\1{\bm{1}}
\def\rvy{{\mathbf{y}}}
\DeclareMathAlphabet{\mathsfit}{\encodingdefault}{\sfdefault}{m}{sl}
\SetMathAlphabet{\mathsfit}{bold}{\encodingdefault}{\sfdefault}{bx}{n}
\def\gA{{\mathcal{A}}}
\def\gL{{\mathcal{L}}}
\def\gO{{\mathcal{O}}}
\def\gS{{\mathcal{S}}}
\newcommand{\E}{\mathbb{E}}
\DeclareMathOperator*{\argmax}{arg\,max}
\DeclareMathOperator*{\argmin}{arg\,min}
\icmltitlerunning{A Connection between One-Step RL and Critic Regularization in RL}
\begin{document}

\twocolumn[
\icmltitle{A Connection between One-Step RL and \\ Critic Regularization in Reinforcement Learning}

\icmlsetsymbol{equal}{*}

\begin{icmlauthorlist}
\icmlauthor{Benjamin Eysenbach}{goo,cmu}
\icmlauthor{Matthieu Geist}{goo}
\icmlauthor{Sergey Levine}{goo,berk}
\icmlauthor{Ruslan Salakhutdinov}{cmu}
\end{icmlauthorlist}

\icmlaffiliation{cmu}{Carnegie Mellon University}
\icmlaffiliation{goo}{Google Research}
\icmlaffiliation{berk}{UC Berkeley}

\icmlcorrespondingauthor{Benjamin Eysenbach}{beysenba@cs.cmu.edu}

\icmlkeywords{reinforcement learning, offline RL, regularization}

\vskip 0.3in
]

\printAffiliationsAndNotice{}  %

\begin{abstract}
As with any machine learning problem with limited data, effective offline RL algorithms require careful regularization to avoid overfitting. One class of methods, known as one-step RL, perform just one step of policy improvement. These methods, which include advantage-weighted regression and conditional behavioral cloning, are thus simple and stable, but can have limited asymptotic performance. A second class of methods, known as critic regularization, perform many steps of policy improvement with a regularized objective. These methods typically require more compute but have appealing lower-bound guarantees. In this paper, we draw a connection between these methods: applying a multi-step critic regularization method with a regularization coefficient of 1 yields the same policy as one-step RL. While our theoretical results require assumptions (e.g., deterministic dynamics), our experiments nevertheless show that our analysis makes accurate, testable predictions about practical offline RL methods (CQL and one-step RL) with commonly-used hyperparameters.

\end{abstract}

\section{Introduction}

\begin{figure}[ht]
\centering
\includegraphics[width=\linewidth]{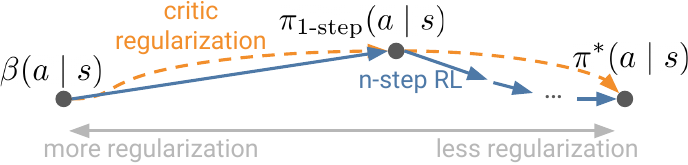}%
\vspace{-0.2em}
\caption{ Both $n$-step RL and critic regularization can interpolate between behavioral cloning (left) and un-regularized RL (right) by varying the regularization parameter. Endpoints of these regularization paths are the same. We prove that these methods also obtain the same policy for an intermediate degree of regularization.}
\label{fig:method}
\vspace{-1.5em}
\end{figure}

Reinforcement learning (RL) algorithms tend to perform better when regularized, especially when given access to only limited data, and especially in batch (i.e., offline) settings where the agent is unable to collect new experience.  While RL algorithms can be regularized using the same tools as in supervised learning (e.g., weight decay, dropout), we will use ``regularization'' to refer to those techniques unique to the RL setting. Such regularization methods include policy regularization (penalizing the policy for sampling out-of-distribution action) and value regularization (penalizing the critic for making large predictions).
Research on these sorts of regularization has grown significantly in recent years, yet theoretical work studying the tradeoffs between regularization methods remains limited~\citep{vieillard2020leverage}.

Many RL methods perform regularization and can be classified by whether they perform one or many steps of policy improvement.
\emph{One-step RL} methods~\citep{brandfonbrener2021offline, peng2019advantage, peters2007reinforcement, peters2010relative} perform one step of policy iteration, updating the policy to choose actions the are best according to the Q-function of the behavioral policy. The policy is often regularized to not deviate far from the behavioral policy.
In theory, policy iteration can take a large number of iterations ($\tilde{\gO}(|\gS||\gA| / (1 - \gamma))$~\citep{scherrer2013improved}) to converge, so one-step RL (one step of policy iteration) fails to find the optimal policy on most tasks. Empirically, policy iteration often converges in a smaller number of iterations~\citep[Sec.~4.3]{sutton2018reinforcement}, and the policy after just a single iteration can sometimes achieve performance comparable to multi-step RL methods~\citep{brandfonbrener2021offline}.
\emph{Critic regularization} methods modify the training of the value function such that it predicts smaller returns for unseen actions~\citep{kumar2020conservative, chebotar2021actionable, yu2021combo, hatch2022example, nachum2019algaedice, an2021uncertainty, bai2022pessimistic, buckman2020importance}.
Intuitively, such critic regularization causes the policy to avoid sampling unseen actions.
In this paper, we will use ``critic regularization'' to specifically refer to multi-step methods that use critic regularization; multi-step methods that do not utilize critic regularization (e.g., KL control~\citep{ziebart2010modeling}, TD3+BC~\citep{fujimoto2021minimalist}) are outside the scope of our analysis.

These RL regularization methods appear distinct. Critic regularization typically involves solving a two-player game, whereby a policy predicts actions with high values while the critic decreases the values predicted for those actions. Prior work~\citep{kumar2020conservative} has argued that this complexity is worthwhile because the regularization effect is being propagated across time via Bellman backups: decreasing the value at one state will also decrease the value at states that lead there.

In this paper, we show that a certain type of one-step RL is equivalent to a certain type of critic regularization, under some assumptions (see Fig.~\ref{fig:method}).
The key idea is that, when using a certain TD loss, the regularized critic updates converge not to the true Q-values, but rather the Q-values multiplied by an importance weight. For the critic, these importance weights mean that the Q-values end up estimating the expected returns of the behavioral policy ($Q^\beta$, as in many one-step methods~\citep{peters2010relative, peters2007reinforcement, peng2019advantage, brandfonbrener2021offline}), rather than the expected returns of the optimal policy ($Q^\pi$). For the actor, these importance weights mean that the logarithm of the Q-values includes a term that looks like a KL divergence. 
This connection allows us to make precise how critic regularization methods implicitly regularize the policy.

The key contribution of this paper is a construction showing that one-step RL produces the same policy as a multi-step critic regularization method, for a certain regularization coefficient.
We also discuss similar connections in settings with varying degrees of regularization, in goal-conditioned settings, and in RL settings with success classifiers. The main assumption
behind these results is that the critic is updated using an update rule based on the cross entropy loss, rather than the MSE loss. Our result is potentially surprising because, algorithmically and mechanistically, one-step RL and critic regularization are very different. Nonetheless, our analysis may help explain why prior work has found that one-step RL and critic regularization methods can perform similarly on some ~\citep{brandfonbrener2021offline, emmons2021rvs} (but not all~\citep{kostrikov2021offline}) problems.
Our results hint that one-step RL methods may be a simpler approach to achieving the theoretical guarantees typically associated with critic regularization methods.
Our analysis does not say that practical implementations (which violate our assumptions) will always behave the same in practice; they do not~\citep[Sec.~7]{brandfonbrener2021offline}. Nonetheless, our experiments show that our analysis makes accurate testable predictions about practical methods.
While our results do not say whether users should regularize the actor or critic in practice, they hint that one-step RL methods may be a simpler way of achieving the theoretical and empirical properties of critic regularization on RL tasks that require strong regularization.

\section{Related Work}

Regularization has been applied to RL in many different ways~\citep{neu2017unified, geist2019theory}, and features prominently in offline RL methods~\citep{lange2012batch, levine2020offline}.
While RL algorithms can be regularized using the same techniques as in supervised learning (e.g., weight decay, dropout), our focus will be on regularization methods unique to the RL setting.
Such RL-specific regularization methods can be roughly categorized into one-step RL methods and critic regularization methods.

One-step RL methods~\citep{brandfonbrener2021offline, gulccehre2020rl, peters2007reinforcement, peng2019advantage, peters2010relative, wang2018exponentially} apply a single step of policy improvement to the behavioral policy. These methods first estimate the Q-values of the behavioral policy, either via regression or iterative Bellman updates. Then, these methods optimize the policy to maximize these Q-values minus an actor regularizer.
Many goal-conditioned or task-conditioned imitation learning methods~\citep{savinov2018semi, ding2019goal, sun2019policy, ghosh2020learning, paster2020planning, Yang2021MHERMH, srivastava2019training, kumar2019reward, chen2021decision, lynch2021language, li2020generalized, eysenbach2020rewriting} also fits into this mold~\citep{eysenbach2022imitating}, yielding policies that maximize the Q-values of the behavioral policy while avoiding unseen actions. Note that non-conditional imitation learning methods do not perform policy improvement, and do not fit into this mold. One-step methods are typically simple to implement and computationally efficient. \looseness=-1

Critic regularization methods instead modify the objective for the Q-function so that it predicts lower returns for unseen actions~\citep{kumar2020conservative, chebotar2021actionable, yu2021combo, hatch2022example, nachum2019algaedice, an2021uncertainty, bai2022pessimistic, buckman2020importance}.
Critic regularization methods are typically more challenging to implement correctly and more computationally demanding~\citep{kumar2020conservative, nachum2019algaedice, bai2022pessimistic, an2021uncertainty}, but can lead to better results on some challenging problems~\citep{kostrikov2021offline}
Our analysis will show that one-step RL is equivalent to a certain type of critic regularization.

Some regularization methods do not fit exactly into these two categories. Methods like KL control regularize both the actor and the reward function~\citep{geist2019theory, ziebart2010modeling, haarnoja2018soft, abdolmaleki2018maximum, wu2019behavior, jaques2019way, rezaeifar2022offline}. Other methods only regularize the policy used in the critic updates~\citep{fujimoto2019off, kumar2019stabilizing}.

Our results are conceptually similar to prior work on regularization in the supervised learning setting. There, regularization methods like weight decay, spectral regularization, and early stopping all appear quite distinct, but are actually mathematical equivalent under some assumptions~\citep{bakushinskii1967general, wahba1987three, fleming1990equivalence, santos1996equivalence, bauer2007regularization}. This result is surprising because the methods are so different: weight decay modifies the loss function, early stopping modifies the update procedure, and spectral normalization is a post-hoc correction.

\section{Preliminaries}

We start by defining the single-task RL problem, and then introduce prototypical examples of one-step RL and critic regularization. We then define an actor critic algorithm for use in our analysis.

\subsection{Notation}
\label{sec:notation}
We assume an MDP with states $s$, actions $a$, initial state distribution $p_0(s_0)$, dynamics $p(s' \mid s, a)$, and reward function $r(s, a)$. %
We assume episodes always have infinite length (i.e., there are no terminal states). Without loss of generality, we assume rewards are positive; adding a positive constant to all rewards can make them all positive without changing the optimal policy. We will learn a Markovian policy $\pi(a \mid s)$ to maximize the expected discounted sum of rewards:
\begin{equation*}
    \max_\pi \E_{\pi(\tau)}\left[\sum_{t=0}^\infty \gamma^t r(s_t, a_t) \mid s_0 \sim p_0(s_0) \right],
\end{equation*}
where $\pi(\tau) = p(s_0)\prod_{t=0}^\infty \pi(a_t \mid s_t) p(s_{t+1} \mid s_t, a_t)$  is the probability of policy $\pi$ sampling an infinite-length trajectory $\tau = (s_0, a_0, \cdots)$.
We define Q-values for policy $\pi(a \mid s)$ as 
\begin{equation*}
Q^{\pi}(s, a) = \E_{\pi(\tau)}\left[\sum_{t=0}^\infty \gamma^t r(s_t, a_t) \mid s_0 = s, a_0 = a \right].
\end{equation*}
Because the rewards are positive, these Q-values are also positive, $Q^\pi(s, a) > 0$. Since we focus on the offline setting, we will consider two policies: $\beta(a \mid s)$ is the \emph{behavioral} policy that collected the dataset, and $\pi(a \mid s)$ is the \emph{online} policy output by the algorithm that attempts to maximize the rewards. We will use $p(s, a, s')$ to denote the empirical distribution of transitions in an offline dataset, and $p(s, a)$ and $p(s)$ denote the corresponding marginal distributions. The behavioral policy is defined as $\beta(a \mid s) = p(a \mid s)$.

\subsection{Examples of Regularization in RL}
While actor and critic regularization methods can be implemented in many ways, we introduce two prototypical examples below to make our discussion more concrete.

\paragraph{Example of one-step RL:~\citet{brandfonbrener2021offline}.}
One-step RL first estimates the Q-values of the behavioral policy ($Q^\beta(s, a)$), and then optimizes the policy to maximize the Q-values minus a actor regularizer. While the actor regularizer can take different forms and the Q-values can be learned via regression, we will use a reverse KL regularizer and TD-style critic update so that the objective is similar to critic regularization:
\begin{align}
    & \max_\pi  \E_{p(s)\pi(a \mid s)}\left[ Q^\beta(s, a) + \lambda \log \frac{\beta(a \mid s)}{\pi(a \mid s)} \right] \label{eq:fqe}
\end{align}
where $Q^\beta(s, a) = \lim_{t\rightarrow \infty} Q_t(s, a)$ and
\begin{align*}
    & Q_{t+1} \gets \argmin_Q \E_{p(s, a)} \left[\left(Q(s, a) - y^{\beta, Q_t}(s, a) \right)^2 \right] \\
    & y^{\beta, Q_t}(s, a) \triangleq r(s, a) + \gamma \E_{\substack{p(s' \mid s, a)\\\beta(a' \mid s')}}\left[Q_t(s', a')\right].
\end{align*}
The scalar $\lambda$ is the regularization coefficient and $\beta(a \mid s)$ is an estimate of the behavioral policy, typically learned via behavioral cloning. Like most TD methods~\citep{haarnoja2018soft, Mnih2013PlayingAW, fujimoto2018addressing}, the TD targets $y$ are not considered learnable.
In practice, most methods do not solve optimize the critic to convergence at each step, instead taking just a few gradient steps before updating the TD targets.
This one-step critic loss is different from the multi-step critic losses used in other RL methods (e.g., TD3, SVG(0)) because it uses the TD target $y^{\beta, Q}(s, a)$ (corresponds to a fixed policy) rather than $y^{\pi, Q}(s, a)$ (corresponding to a sequence of learned policies).
One-step RL amounts to performing one step of policy iteration, rather than full policy optimization. While truncating the iterations of policy iteration can be suboptimal, it can also be interpreted as a form of early stopping regularization.

\paragraph{Example of critic regularization:~\citet{kumar2020conservative}.}
CQL~\citep{kumar2020conservative} modifies the standard Bellman loss to include an additional term that decreases the values predicted for unseen actions.  The actor objective is to maximize Q values; some CQL implementations also regularize the actor loss~\citep{hoffman2020acme, kumar2020conservative}). The objectives can then be written as
\begin{align}
    & \max_\pi  \E_{p(s)\pi(a \mid s)}\left[ Q^\pi(s, a) \right] \label{eq:cql}
\end{align}
where $Q^\pi(s, a) = \lim_{t\rightarrow \infty} Q_t(s, a)$ and
\begin{align*}
Q_{t+1} = & \argmin_Q \E_{p(s, a)} \left[\left(Q(s, a) - y^{\pi, Q_t}(s, a)\right)^2 \right] \\
& + \lambda \left( \E_{p(s) \pi(a \mid s)}\left[Q(s, a) \right]  - \E_{p(s)\beta(a \mid s)}\left[Q(s, a) \right]\right). 
\end{align*}
The second term decreases the Q-values for unseen actions (those sampled from $\pi(a \mid s)$) while the third term increases the values predicted for seen actions (those sampled from the behavioral policy $\beta(a \mid s)$).
Unlike standard temporal difference methods, the CQL updates resemble a competitive game between the actor and the critic. In practice, this cyclic dependency can create unstable learning~\citep{kumar2020conservative, hoffman2020acme}.

\subsection{How are these methods connected?}

Prior work has observed that one-step methods and critic regularization
methods perform similarly on many~\citep{fujimoto2021minimalist, emmons2021rvs} (but not all~\citep{kostrikov2021offline}) tasks.
Despite the differences in objectives and implementations of these two methods (and, more broadly, the actor/critic regularization methods for which they are prototypes), are there deeper, unifying connections between the methods?

In the next section, we introduce a different actor-critic method that will allow us to draw a connection
between one-step RL and critic regularization. We experimentally validate this equivalence in Sec.~\ref{sec:experiments-exact}.
Despite its difference from practically-used methods, such as one-step RL and CQL, we will show that it makes accurate predictions about the behavior of these practical methods (Sec.~\ref{sec:experiments-tab} and~\ref{sec:experiments-cts}).

\subsection{Classifier Actor Critic}
\label{sec:classifier-ac}

To support our analysis, we will introduce a new actor-critic algorithm. This algorithm is similar to prior work, but trains the critic using a cross entropy loss instead of an MSE loss. We introduce this algorithm not because we expect it to perform better than existing actor-critic methods, but rather because it allows us to make precise a connection between actor and critic regularization.  This method treats the value function like a classifier, so we will call it \emph{classifier actor critic}. We will then introduce actor-regularized and critic-regularized versions of this method. The subsequent section (Sec.~\ref{sec:analysis}) will show that these two regularized methods learn the same policy.

The key to our analysis will be to treat Q-values like probabilities, so we define the critic loss in terms of a cross-entropy loss, similar to prior work~\citep{kalashnikov2018scalable, eysenbach2021replacing}. Recalling that Q-values are positive (Sec.~\ref{sec:notation}), we transform the Q-values to have the correct range by using $\frac{Q}{Q+1} \in [0, 1)$. We will minimize the cross-entropy loss applied to the transformed Q-values:
\begin{align*}
    & \E_{\substack{p(s, a)}}\left[ \mathcal{CE}\left(\frac{Q(s, a)}{Q(s, a) + 1}; \frac{y^{\pi, Q_t}(s, a)}{y^{\pi, Q_t}(s, a) + 1} \right) \right] \\
    &= -\E_{p(s, a)}\left[\frac{y^{\pi, Q_t}(s, a)}{y^{\pi, Q_t}(s, a) + 1} \log \frac{Q(s, a)}{Q(s, a) + 1} \right. \nonumber \\
    & \hspace{8em} \left. + \frac{1}{y^{\pi, Q_t}(s, a) + 1} \log \frac{1}{Q(s, a) + 1} \right] \nonumber \\
\end{align*}
\begin{align}
    & \stackrel{\text{const.}}{=} -\E_{p(s, a)} \bigg[y^{\pi, Q_t}(s, a) \log \frac{Q(s, a)}{Q(s, a) + 1} \nonumber \\
    & \hspace{4em} \underbrace{\hspace{8em}  + \log \frac{1}{Q(s, a) + 1} }_{\triangleq \gL_\text{critic}(Q, y^{\pi, Q_t})}\bigg]. \label{eq:critic}
\end{align}
In the last line we scale both the positive and negative term by $y^{\pi, Q_t}(s, a) + 1$, a choice that does not change the optimal classifier but reduces notational clutter.
When the TD target can be computed exactly, solving this optimization problem results in performing one SARSA update: $Q(s, a) \gets r(s, a) + \gamma Q(s', a')$ (see Lemma~\ref{lemma:1}). Thus, by solving this optimization problem many times, each time using the previous Q-value to compute the TD targets, we will converge to the correct Q-values (see Lemma~\ref{lemma:1}).
The actor objective is to maximize the expected \emph{log} of the Q-values: \looseness=-1
\begin{align}
    & \max_\pi \gL_\text{actor}(\pi) \triangleq \E_{p(s) \pi(a \mid s)}\left[\log(Q^\pi(s, a)) \right] \label{eq:actor}
\end{align}
where $Q^\pi(s, a) = \lim_{t\rightarrow \infty} Q_t(s, a)$ and
\begin{align*}
    Q_{t+1} = \argmin_Q \gL_\text{critic}(Q, y^{\pi, Q_t}).
\end{align*}
While most actor-critic methods do not use the logarithm transformation, prior work on conditional behavioral cloning (e.g.,~\citep{savinov2018semi, ding2019goal, sun2019policy, ghosh2020learning, srivastava2019training}) implicitly includes this transformation~\citep{eysenbach2022imitating}.
In the absence of additional regularization, the optimal policy $\pi(a \mid s) = \mathbbm{1}(a = \argmax_{a'} Q(s, a'))$ 
is the same as the optimal policy for the standard actor objective (without the logarithm).
We next introduce a one-step version of this method, as well as a critic regularization variant that resembles CQL. While we will implicitly use a regularization coefficient of $1$ below, Appendix~\ref{appendix:lambda} discusses versions of classifier actor critic with varying degrees of regularization.

\paragraph{One-step RL.}
To make classifier actor critic resemble one-step RL~\citep{brandfonbrener2021offline}, we make two changes: estimating the value of the behavioral policy and adding a regularization term to the actor objective. To estimate the value of the behavioral policy, we modify the critic loss to sample the next action $a'$ from the behavioral policy (i.e., we use $y^{\beta, Q_t}(s, a)$ rather than $y^{\pi, Q_t}(s, a)$).
We also regularize the policy by adding a relative entropy term to the actor loss, analogous to the reverse KL penalty used in one-step RL:
\begin{align}
    & \max_\pi \E_{p(s)\pi(a \mid s)}\left[\log Q^\beta(s, a) + \log \beta(a \mid s) - \log \pi(a \mid s) \right] \label{eq:policy-regularized}
    \end{align}
where $Q^\beta(s, a) = \lim_{t \rightarrow \infty} Q_t(s, a)$ and
\begin{align*}
 Q_{t+1} = \argmin_Q \gL_\text{critic}(Q, y^{\beta, Q_t}).
\end{align*}
In tabular settings, this critic objective estimates the Q-values for $\beta(a \mid s)$ (Appendix Lemma~\ref{lemma:1}).

\paragraph{Critic regularization.}
To emulate CQL, we modify the critic loss (Eq.~\ref{eq:critic}) by adding a penalty term that decreases the values for unseen actions. Whereas CQL applies this penalty to the Q-values directly, we will apply it to the logarithm of the Q-values:\footnote{From a dimensional analysis perspective~\citep{huntley1967dimensional}, this choice makes sense because it allows the penalty term to have the same ``units'' as the critic loss: log Q-values. A second motivation for regularizing the logarithm is that the actor loss uses a logarithm.}
\begin{align}
    & \max_\pi \; \E_{p(s)\pi(a \mid s)}\left[\log Q_r^\pi(s, a) \right] \label{eq:critic-regularized}
    \end{align}
where $Q_r^\pi(s, a) = \lim_{t \rightarrow \infty} Q_t(s, a)$ and $Q_{t+1}(s, a) = \argmin_Q \gL_\text{critic}^r(Q, y^{\pi, Q_t})$:
\begin{align*}
   &\gL_\text{critic}^r(Q, y^{\pi, Q_t}) \triangleq \gL_\text{critic}(Q, y^{\pi, Q_t}) \\
   & + \lambda \Big(\E_{\!\substack{p(s)\\\pi(a \mid s)}\!\!}\left[ \log (Q(s, a) + 1)\right] - \E_{\substack{\!p(s)\\\beta(a \mid s)}\!\!}\left[ \log (Q(s, a) + 1) \right] \Big).
\end{align*}

\section{A Connection between One-Step RL and Critic Regularization}
\label{sec:analysis}

This section provides our main result, which is that actor and critic regularization yield the same policy under some settings. The key to proving this connection will be to analyze the Q-values learned by critic regularization. While we mainly focus on the single-task setting, Sec.~\ref{sec:other-settings} describes how similar results also apply to other settings, including goal-conditioned RL, example-based control, and settings with smaller degrees of regularization. All proofs are in Appendix~\ref{appendix:proofs}.

To relate one-step RL to critic regularization, we start by analyzing the Q-values learned by both methods. We first show that the classifier critic converges to the correct Q-values:
\begin{lemma} \label{lemma:1}
Assume that states and actions are tabular (discrete and finite), that rewards are positive, and that TD targets can be computed exactly (without sampling).
Incrementally update the critic by solving a sequence of optimization problems:
\begin{equation*}
    Q_{t+1} \gets \argmin_Q \gL_\text{critic}(Q, y^{\pi, Q_t}).
\end{equation*}
This sequence of Q-functions will converge to $Q^\pi$:
\begin{equation*}
    \lim_{t \rightarrow \infty} Q_t(s, a) = Q^\pi(s, a) \text{ for all states $s$ and actions $a$}.
\end{equation*}
\end{lemma}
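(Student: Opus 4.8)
The statement has two parts: (i) a single minimization of $\gL_\text{critic}(Q, y^{\pi,Q_t})$ recovers the SARSA-style target exactly, i.e. $Q_{t+1}(s,a) = y^{\pi,Q_t}(s,a)$; and (ii) iterating this map converges to $Q^\pi$. The plan is to establish (i) by a pointwise optimization argument and then invoke the standard $\gamma$-contraction of the policy-evaluation Bellman operator for (ii).

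Let me think about proving (i). Since states and actions are tabular and the data distribution $p(s,a)$ has full support (this should be assumed, or noted as needed), minimizing the expectation decomposes into independent scalar problems, one per $(s,a)$ pair. So fix $(s,a)$, write $y = y^{\pi,Q_t}(s,a) > 0$ (positive because rewards are positive and, inductively, $Q_t \ge 0$), and $q = Q(s,a) \ge 0$. The per-coordinate objective from Eq.~\eqref{eq:critic} is
$$
f(q) = -\,y \log \frac{q}{q+1} \;-\; \log \frac{1}{q+1} \;=\; -\,y\log q + (y+1)\log(q+1).
$$
Differentiating, $f'(q) = -y/q + (y+1)/(q+1)$, which vanishes exactly when $(y+1)q = y(q+1)$, i.e. $q = y$. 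Checking $f'$ changes sign from $-$ to $+$ at $q=y$ (or checking $f'' > 0$) confirms this is the unique minimizer on $(0,\infty)$, and $f(q)\to\infty$ as $q\to 0^+$ or $q\to\infty$, so the minimizer is interior. Hence $Q_{t+1}(s,a) = y^{\pi,Q_t}(s,a) = r(s,a) + \gamma\,\E_{p(s'\mid s,a)\pi(a'\mid s')}[Q_t(s',a')]$ for every $(s,a)$. This also shows inductively that $Q_{t+1}\ge 0$, keeping the targets positive so the logarithms are well-defined throughout.

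For (ii): the recursion from (i) is precisely $Q_{t+1} = \gT^\pi Q_t$, where $\gT^\pi$ is the policy-evaluation Bellman operator for $\pi$. This operator is a $\gamma$-contraction in the $\ell_\infty$ norm on the (finite-dimensional, since tabular) space of Q-functions, with unique fixed point $Q^\pi$. By the Banach fixed-point theorem, $Q_t \to Q^\pi$ pointwise (indeed uniformly) regardless of the initialization $Q_0$, which gives the claim. One should note that the contraction argument does not require positivity, but positivity of rewards was used in part (i) to guarantee the cross-entropy objective is well-posed.

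The main obstacle — really the only non-routine point — is part (i): verifying that the particular cross-entropy surrogate on the transformed Q-values $\frac{Q}{Q+1}$ has its minimizer exactly at $q = y$ rather than at some distorted value. The nonlinearity of the map $q \mapsto \frac{q}{q+1}$ makes it a priori unclear that cross-entropy ``undoes'' it cleanly, but the first-order condition above shows it does. A secondary bookkeeping point is maintaining the invariant $Q_t \ge 0$ (equivalently $y^{\pi,Q_t} > 0$) along the iteration so that every logarithm in $\gL_\text{critic}$ is defined; this follows by the same induction. Everything else is the textbook contraction-mapping argument for policy evaluation.
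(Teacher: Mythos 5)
Your proposal is correct and follows essentially the same route as the paper's proof: show that minimizing the cross-entropy loss pointwise recovers the exact policy-evaluation update $Q_{t+1}(s,a) = y^{\pi,Q_t}(s,a)$, then invoke the standard contraction/convergence result for policy evaluation. Your version is somewhat more careful, making explicit the first-order condition that the paper summarizes as ``the cross entropy loss is minimized when the predictions equal the labels,'' and tracking the positivity invariant needed for the logarithms to be well-defined.
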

Because one-step RL trains the critic using $\gL_\text{critic}(Q, y^{\beta, Q})$, it learns Q-values corresponding to $Q^\beta(s, a)$. When regularization is added to the critic updates, it learns different Q-values. Perhaps surprisingly, this regularization means that our estimates for the value of policy $\pi(a \mid s)$ look like the value of the original behavioral policy:
\begin{lemma} \label{lemma:critic-reg}
Assume that states and actions are tabular (discrete and finite), that rewards are positive, and that TD targets can be computed exactly (without sampling).
Incrementally update the critic by minimizing a sequence of regularized critic losses using policy $\pi$ and hyperparameter $\lambda = 1$:
\begin{equation*}
    Q_{t+1} \gets \argmin_Q \gL_\text{critic}^r(Q, y^{\pi, Q_t}).
\end{equation*}
In the limit, this sequence of Q-functions will converge to the Q-values for the behavioral policy ($\beta(a \mid s)$), weighted by the ratio of the behavioral and online policies:
\begin{equation*}
    \lim_{t \rightarrow \infty} Q_t(s, a) = \frac{Q^{\beta}(s, a)\beta(a \mid s)}{\pi(a \mid s)}
\end{equation*}
for all states $s$ and actions $a$.
\end{lemma}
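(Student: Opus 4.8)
The plan is to use the tabular assumption to turn the fixed-point equation into a pointwise minimization over $Q$, solve that minimization in closed form, and then recognize the resulting critic iteration as an importance-weighted rescaling of the ordinary policy-evaluation recursion for $\beta$. \textbf{Step 1 (simplify the regularized loss).} Substituting $\lambda = 1$ and $p(s,a) = p(s)\beta(a\mid s)$ and using $\log\frac{1}{Q+1} = -\log(Q+1)$, the term $\E_{p(s)\beta(a\mid s)}[\log(Q(s,a)+1)]$ produced by $\gL_\text{critic}$ exactly cancels the $-\,\E_{p(s)\beta(a\mid s)}[\log(Q(s,a)+1)]$ in the regularizer, leaving
$$\gL_\text{critic}^r(Q,y^{\pi,Q_t}) = -\E_{p(s)\beta(a\mid s)}\!\Big[y^{\pi,Q_t}(s,a)\log\tfrac{Q(s,a)}{Q(s,a)+1}\Big] + \E_{p(s)\pi(a\mid s)}\big[\log(Q(s,a)+1)\big].$$

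\textbf{Step 2 (pointwise minimization).} Since there are finitely many states and actions and the $Q(s,a)$ are independent variables, this loss decomposes as $\sum_{s,a} f_{s,a}(Q(s,a))$ with $f_{s,a}(Q) = -by\log Q + (by+c)\log(Q+1)$, where $b = p(s)\beta(a\mid s)$, $c = p(s)\pi(a\mid s)$, and $y = y^{\pi,Q_t}(s,a) > 0$ (positive by induction, using positive rewards and $Q_t>0$). Setting $f_{s,a}'(Q) = -\frac{by}{Q} + \frac{by+c}{Q+1} = 0$ gives the unique positive critical point $Q = \frac{by}{c} = \frac{\beta(a\mid s)}{\pi(a\mid s)}\,y^{\pi,Q_t}(s,a)$; checking that $f_{s,a}$ is strictly convex in $\log Q$ and diverges as $Q\to 0^+$ and $Q\to\infty$ confirms this is the global minimizer. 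Hence the regularized critic iteration is $Q_{t+1}(s,a) = \frac{\beta(a\mid s)}{\pi(a\mid s)}\big(r(s,a) + \gamma\,\E_{p(s'\mid s,a)\pi(a'\mid s')}[Q_t(s',a')]\big)$.

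\textbf{Step 3 (change of variables and convergence).} Define $\tilde Q_t(s,a) \triangleq \frac{\pi(a\mid s)}{\beta(a\mid s)}Q_t(s,a)$. Substituting $Q_t(s',a') = \frac{\beta(a'\mid s')}{\pi(a'\mid s')}\tilde Q_t(s',a')$ into the TD target collapses the importance weight against $\pi(a'\mid s')$, yielding $\tilde Q_{t+1}(s,a) = r(s,a) + \gamma\,\E_{p(s'\mid s,a)\beta(a'\mid s')}[\tilde Q_t(s',a')]$ — exactly the SARSA / policy-evaluation recursion for the behavioral policy. Invoking Lemma~\ref{lemma:1} in its $\beta$ form (as noted after Eq.~\ref{eq:policy-regularized}), equivalently the fact that the $\beta$-Bellman operator is a $\gamma$-contraction in the sup norm, gives $\tilde Q_t \to Q^\beta$, so $Q_t(s,a) = \frac{\beta(a\mid s)}{\pi(a\mid s)}\tilde Q_t(s,a) \to \frac{Q^{\beta}(s,a)\beta(a\mid s)}{\pi(a\mid s)}$, as claimed.

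I expect the main obstacle to be Step 2: one must verify that the minimization genuinely decouples across $(s,a)$, that the interior critical point is the global minimizer on $\{Q>0\}$ rather than a boundary point, and that the expressions are well-defined at coordinates where $\pi(a\mid s)$ or $\beta(a\mid s)$ vanishes. I would handle the last point either by imposing a full-support assumption on both policies or by treating those degenerate coordinates separately (there $Q_{t+1}=0$, which is consistent with the stated limit). Given Lemma~\ref{lemma:1}, the convergence argument in Step 3 is then routine.
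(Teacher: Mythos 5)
Your proposal is correct and follows essentially the same route as the paper's proof: cancel the behavioral-policy term against the regularizer using $p(s,a)=p(s)\beta(a\mid s)$, identify the pointwise minimizer $Q_{t+1}(s,a)=\frac{\beta(a\mid s)}{\pi(a\mid s)}y^{\pi,Q_t}(s,a)$ (the paper reads this off as the Bayes-optimal weighted classifier, you derive it by direct differentiation), and then reparametrize via $\tilde{Q}=Q\,\pi/\beta$ to reduce the iteration to policy evaluation for $\beta$. Your explicit verification of global optimality of the interior critical point and your handling of coordinates where $\pi$ or $\beta$ vanishes are welcome refinements the paper glosses over, but they do not change the argument.
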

\begin{hproof}
\vspace{-0.5em}
The ratio $\frac{\beta(a \mid s)}{\pi(a \mid s)}$ above is an importance weight. Ordinarily, a TD backup for policy $\pi(a \mid s)$ would entail sampling an action $a \sim \pi(a \mid s)$. However, this importance weight means that TD backup is effectively performed by sampling an action $a \sim \beta(a \mid s)$. Such a TD backup resembles the TD backup for $\beta(a \mid s)$. The full proof is in Appendix~\ref{appendix:proofs}.
\end{hproof}
\vspace{-0.5em}
Intuitively, this result says that critic regularization reweights the Q-values to assign higher values to in-distribution actions, where $\beta(a \mid s)$ is large. An unexpected part of this result is that the Q-values correspond to the behavioral policy.
In other words, critic regularization added to a multi-step RL method (one using $y^{\pi, Q_t}(s, a)$) yields the same critic as a one-step RL method (one using $y^{\beta, Q_t}(s, a)$).
Our main result is a direct corollary of this Lemma:
\begin{theorem} \label{thm:main}
Let a behavioral policy $\beta(a \mid s)$ be given and let $Q^\beta(s, a)$ be the corresponding value function. Let $\pi(a \mid s)$ be an arbitrary policy (typically learned) with support constrained to $\beta(a \mid s)$ (i.e., $\pi(a \mid s) > 0 \implies \beta(a \mid s) > 0$). Let $Q_r^\pi(s, a)$ be the critic obtained by the regularized critic update (Eq.~\ref{eq:critic-regularized}) to this policy with $\lambda = 1$. Then critic regularization results in the same policy as one-step RL:
{\footnotesize \begin{equation*}
    \E_{\pi(a \mid s)}\left[\log Q_r^\pi(s, a)\right] = \E_{\pi(a \mid s)}\left[\log Q^\beta(s, a) + \log \frac{\beta(a \mid s)}{\log \pi(a \mid s)} \right]
\end{equation*}}
for all states $s$.
\end{theorem}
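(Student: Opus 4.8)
The plan is to obtain the theorem as an immediate corollary of Lemma~\ref{lemma:critic-reg}, which already contains all the substance. First I would invoke that lemma: since states and actions are tabular, rewards are positive, the TD targets are computed exactly, and $\lambda = 1$, the regularized critic update (Eq.~\ref{eq:critic-regularized}) applied to the fixed policy $\pi$ converges to
\[
    Q_r^\pi(s, a) = \frac{Q^\beta(s, a)\,\beta(a \mid s)}{\pi(a \mid s)}.
\]
Before taking logarithms I would verify that this quantity is strictly positive and finite for every $(s,a)$ with $\pi(a \mid s) > 0$: positivity of $Q^\beta(s,a)$ follows from the positive-reward assumption of Sec.~\ref{sec:notation}, and the support condition $\pi(a\mid s) > 0 \implies \beta(a\mid s) > 0$ ensures the importance weight $\beta(a\mid s)/\pi(a\mid s)$ is a well-defined positive real. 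For actions with $\pi(a\mid s) = 0$ the corresponding term contributes nothing to the expectation $\E_{\pi(a\mid s)}[\cdot]$ under the convention $0\cdot\log 0 = 0$, so the right-hand side is well-defined as well.

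Next I would take the logarithm of the closed form and split it additively,
\[
    \log Q_r^\pi(s, a) = \log Q^\beta(s, a) + \log \beta(a \mid s) - \log \pi(a \mid s),
\]
then take the expectation over $a \sim \pi(a\mid s)$ of both sides and recombine $\log\beta(a\mid s) - \log\pi(a\mid s) = \log\frac{\beta(a\mid s)}{\pi(a\mid s)}$. This yields exactly the stated identity, which holds for each state $s$ separately (the outer expectation over $p(s)$ can then be applied to both sides).

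Finally I would record the consequence. The right-hand side is precisely the per-state integrand of the one-step RL actor objective (Eq.~\ref{eq:policy-regularized}), while the left-hand side is the per-state integrand of the critic-regularized actor objective (Eq.~\ref{eq:critic-regularized}); since these integrands agree pointwise in $s$ for an arbitrary policy $\pi$, the two full objectives $\E_{p(s)\pi(a\mid s)}[\cdot]$ coincide as functionals of $\pi$, and hence are maximized by the same policy. I do not anticipate any real obstacle in this argument — all the difficulty is already absorbed into Lemma~\ref{lemma:critic-reg} — and the only points requiring care are the well-definedness of the logarithms (handled by positivity of $Q^\beta$ and the support condition) and the $\pi(a\mid s)=0$ convention noted above.
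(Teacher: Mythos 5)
Your proposal is correct and follows essentially the same route as the paper: invoke Lemma~\ref{lemma:critic-reg} to get the closed form $Q_r^\pi(s,a) = Q^\beta(s,a)\,\beta(a\mid s)/\pi(a\mid s)$, take logarithms, and split additively before taking the expectation over $a \sim \pi(\cdot\mid s)$. The only difference is that you explicitly check positivity of the argument of the logarithm and handle the $\pi(a\mid s)=0$ case, which the paper leaves implicit; this is a welcome but minor addition.
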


\begin{figure*}[t]
    \centering
    \begin{subfigure}[t]{0.65\textwidth}
        \centering
        \includegraphics[width=0.4615\linewidth]{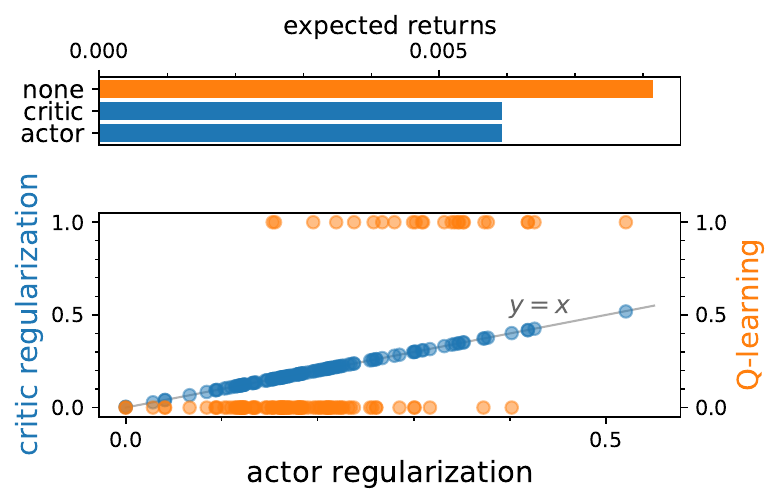}
        \includegraphics[width=0.4615\linewidth]{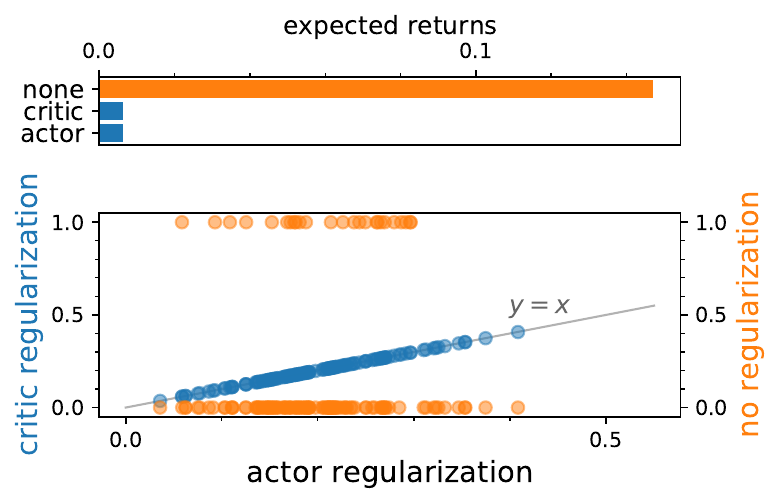}
        \caption*{ MDPs where regularization decreases returns.}
    \end{subfigure}
    \hfill
    \begin{subfigure}[t]{0.3\textwidth}
        \includegraphics[width=\linewidth]{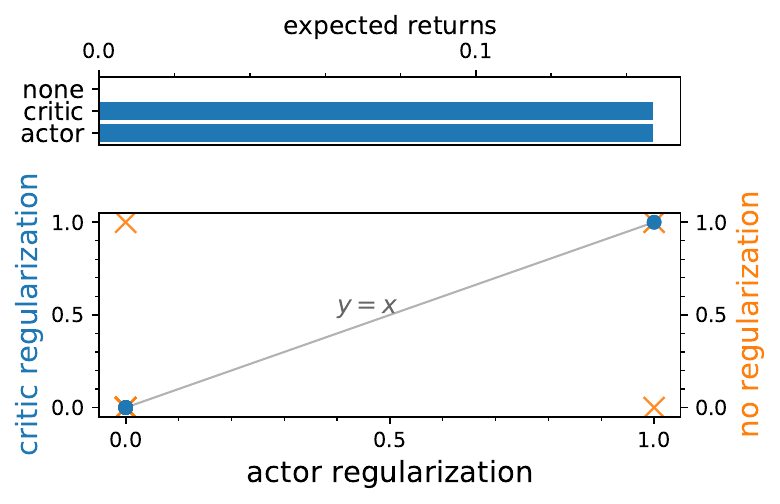}
        \caption*{ MDP where regularization increases returns.}
    \end{subfigure}
    \vspace{-1em}
    \caption{ \textbf{Actor and critic regularization produce identical policies.} Across three tabular settings, we plot the action probabilities $\pi(a \mid s)$ for the policies produced by one-step RL and critic-regularized classifier actor-critic ($R^2 \ge 0.999$). We also plot the action probabilities for a policy learned by an unregularized policy to confirm that the equivalence between one-step RL and critic regularization is not a coincidence.}
    \label{fig:gridworld}
\end{figure*}

\vspace{-0.5em}
Since both forms of regularization result in the same objective for the actor, they must produce the same policy in the end. While prior work has mentioned that critic regularization implicitly regularizes the policy~\citep{yu2021combo}, this result shows that under the assumptions stated above, the implicit regularization of critic regularization results in the exact same policy learning objective as one-step RL. 
This equivalence holds when $\lambda = 1$, and not necessarily for other regularization coefficients. Appendix~\ref{appendix:lambda} shows how a variant of this result that includes an additional regularization mechanism does apply to different regularization coefficients. 
This connection between one step RL and critic regularization concerns their \emph{objective functions}, not the \emph{procedures} used to optimize those objective functions. Indeed, because practical offline RL algorithms sometimes use different optimization procedures (e.g., TD vs. MC estimates of $Q^\beta(s, a)$), they will incur errors in estimating $Q^\beta(s, a)$, violating Theorem~\ref{thm:main}'s assumption that these Q-values are estimated exactly.

\paragraph{Limitations.}

Our theoretical analysis makes assumptions that may not always hold in practice. For example, our results use a critic loss based on the cross entropy loss, while most (but not all~\citep{kalashnikov2018scalable, eysenbach2020c}) practical methods use the MSE. Our analysis assumes that critic regularization arrives at an equilibrium, 
and ignores errors introduced by function approximation and sampling. Nonetheless,
our theoretical results will make accurate predictions about prior offline RL methods. \looseness=-1

\subsection{Extensions of the Analysis}
\label{sec:other-settings}

We extend this analysis in three ways.
\emph{First}, we also show that a similar connection can be established for lesser degrees of regularization ($\lambda < 1$)  (see Appendix~\ref{appendix:lambda}).
\emph{Second}, we show that a similar connection holds for RL problems defined via success examples~\citep{pinto2016supersizing, tung2018reward, kalashnikov2021mt, singh2019end, zolna2020offline, calandra2017feeling, eysenbach2021replacing}
These results use existing actor-critic method, rather than classifier actor critic (see Appendix~\ref{appendix:goals}).
\emph{Third}, we extend our analysis to multi-task settings by looking at goal-conditioned RL problems. \looseness=-1
Taken together, these extensions show that the connection between actor and critic regularization extends to other commonly-studied problem settings.

\section{Numerical Simulations}

Our numerical simulations study whether the theoretical connection between actor and critic regularization holds empirically. The first experiments (Sec.~\ref{sec:experiments-exact}) will use classifier actor-critic, and we will expect the equivalence to hold exactly in this setting. We then study whether this connection still holds for practical prior methods (one-step RL and CQL), which violate our assumptions. We study these commonly-used methods in both tabular settings (Sec.~\ref{sec:experiments-tab}) and on a benchmark offline RL task with continuous states and actions (Sec.~\ref{sec:experiments-cts}).
We do not expect these methods to always be the same (see, e.g.,~\citet[Table 1]{kostrikov2021offline}), and we will focus our experiments on critic regularization with moderate regularization coefficients. See Appendix~\ref{appendix:details} for details and hyperparameters for the experiments. Code for the tabular experiments is available online.\footnote{Code: \tiny \url{https://github.com/ben-eysenbach/ac-connection}}

\subsection{Exact Equivalence for Classifier Actor Critic}
\label{sec:experiments-exact}

\begin{figure*}[t]
    \centering
    \begin{subfigure}[b]{0.19\textwidth}
        \includegraphics[width=\linewidth]{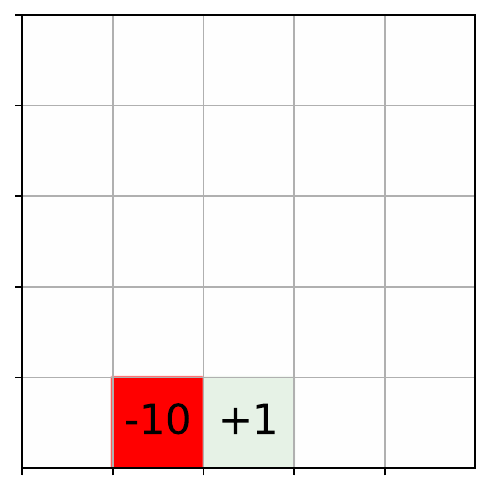}
        \caption{ reward function}
    \end{subfigure}
    \hfill
    \begin{subfigure}[b]{0.19\textwidth}
        \includegraphics[width=\linewidth]{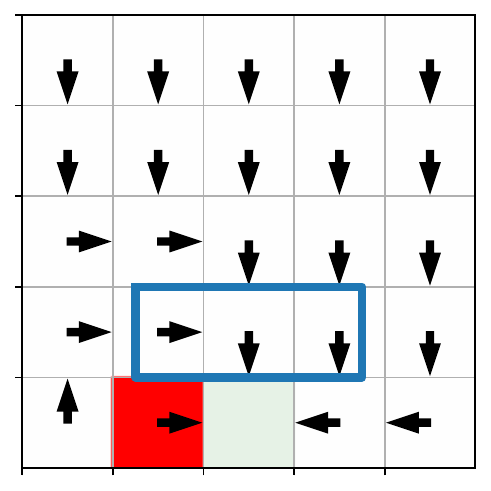}
        \caption{ Q-learning}
    \end{subfigure}
    \hfill
     \begin{subfigure}[b]{0.19\textwidth}
        \includegraphics[width=\linewidth]{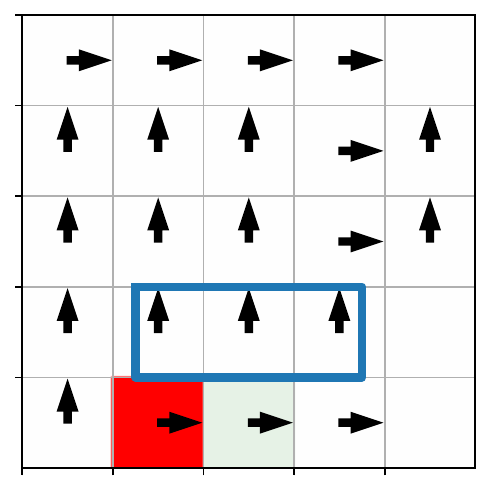}
        \caption{ One-step RL}
    \end{subfigure}
    \hfill
    \begin{subfigure}[b]{0.19\textwidth}
        \includegraphics[width=\linewidth]{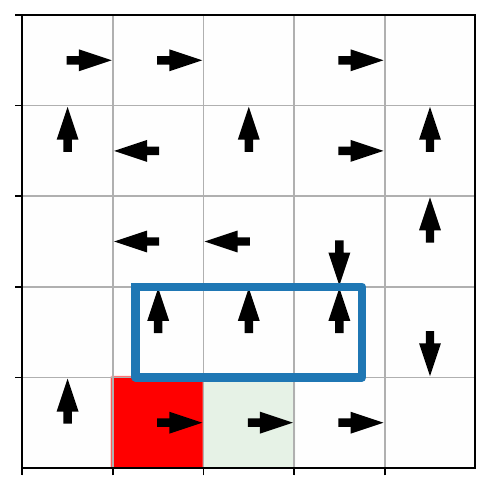}
        \caption{ CQL($\lambda=10$)}
    \end{subfigure}
    \begin{subfigure}[b]{0.19\textwidth}
        \includegraphics[width=\linewidth]{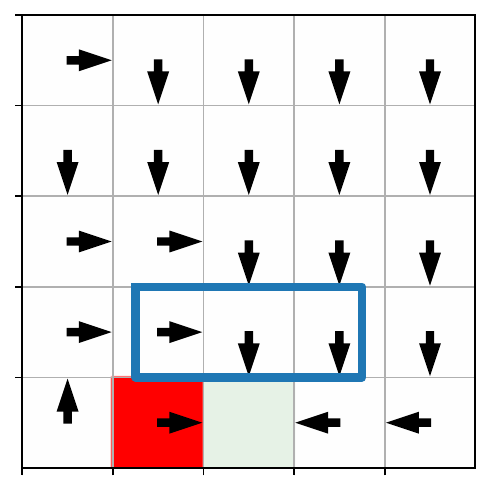}
        \caption{ CQL($\lambda=0.1$)}
    \end{subfigure}
    \vspace{-0.5em}
    \caption{ \textbf{CQL can behave like one-step RL.}
    We design a gridworld \emph{(a)} so that one-step RL \emph{(c)} learns a suboptimal policy. For the three cells highlighted in blue, the optimal policy \emph{(b)} navigates towards the high-reward state (green) while the one-step RL policy \emph{(c)} navigates away from the high-reward state. \emph{(d)} CQL with a large regularization coefficient exhibits the same suboptimal behavior as one-step RL, taking actions that lead away from the high-reward states. \emph{(e)} CQL with a small regularization coefficient behaves like Q-learning.
    For clarity, we only show the argmax action in each state; we omit the arrow when the argmax action is ``do nothing''.}
    \label{fig:cql-sarsa}
\end{figure*}

Our first experiment aims to validate our theoretical result under the required assumptions: when using classifier actor-critic as the RL algorithm, and when using a tabular environment. We use a $5 \times 5$ deterministic gridworld with 5 actions (up/down/left/right/nothing). We describe the reward function and other details in Appendix~\ref{appendix:details}.
To ensure that critic regularization converges to a fixed point and to avoid oscillatory learning dynamics, we update the policy using an exponential moving average.  We also include (unregularized) classifier actor-critic to confirm that regularization is important in some settings. \looseness=-1

We compare these three methods in three environments. The first setting (Fig.~\ref{fig:gridworld} \emph{(left)}) checks our theory that one-step RL and critic regularization should obtain the same policy. 
The second setting (Fig.~\ref{fig:gridworld} \emph{(center)}) shows that one-step RL and critic regularization learn the same (suboptimal) policy in settings where using the Q-values for the behavioral policy lead to a suboptimal policy. 
The final setting is designed so that regularization increases the expected returns.
The dataset is a single trajectory from the initial state to the goal. With such limited data, unregularized classifier actor critic overestimates the Q-values at unseen actions, learning a policy that mistakenly takes these actions. In contrast, the regularized approaches learn to imitate the expert trajectory. Fig.~\ref{fig:gridworld} \emph{(right)} shows that both forms of regularization produce the optimal policy.
In summary, these tabular experiments validate our theoretical results, including in settings where regularization is useful and harmful. These experiments also demonstrate that the actor-critic method introduced in Sec.~\ref{sec:classifier-ac} does converge (Lemma~\ref{lemma:1}).

\subsection{Predictions about Prior Methods: Tabular Setting}
\label{sec:experiments-tab}

Based on our theoretical analysis, we predict that practical implementations of one-step RL and critic regularization will exhibit similar behavior, for a certain critic regularization coefficient. 
This section studies the tabular setting, and the following section will use a continuous control benchmark. For critic regularization, we used CQL~\citep{kumar2020conservative} together with soft value iteration; following~\citep{brandfonbrener2021offline}, we implement one-step RL (reverse KL) using Q-learning.

We designed a deterministic gridworld so one-step RL would fail to learn the optimal policy (see Fig.~\ref{fig:cql-sarsa} \emph{(left)}). If CQL interpolates between the behavioral policy (random) and the optimal policy, then the argmax action would always be the same as the action for $\pi^*$. Based on our analysis, we make a different prediction: that CQL will learn a policy similar to the one-step RL policy.
We show results in Fig.~\ref{fig:cql-sarsa}, just showing the argmax action for visual clarity. The CQL policy takes actions away from both the high-reward state and the low reward state, similar to the behavioral policy but different from both the behavioral policy and the optimal policy. This experiment suggests that CQL can exhibit behavior similar to one-step RL. 
Of course, this effect is mediated by the regularization strength: a larger regularization coefficient would cause CQL to learn a random policy, and a coefficient of 0 would make CQL identical to Q-learning.
We extend Fig.~\ref{fig:cql-sarsa} to include classifier actor critic and regularized variants in Appendix Fig.~\ref{fig:cac}.

\begin{figure}[t]
    \centering
    \includegraphics[width=\linewidth]{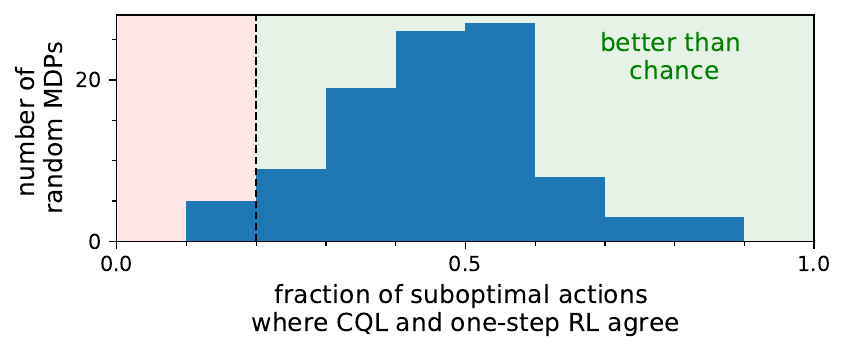}
    \vspace{-2em}
    \caption{ CQL and one-step RL take similar actions on most MDPs that resemble Fig.~\ref{fig:cql-sarsa}}
    \label{fig:histogram}
\end{figure}

\paragraph{How often does one-step RL approximate CQL?}

To show that the results in Fig.~\ref{fig:cql-sarsa} are not cherry-picked, we repeated this experiment using 100 MDPs that are structurally similar to that in Fig.~\ref{fig:cql-sarsa}, but where the locations of the high-reward and low reward state are randomized. In each randomly generated MDP, we determine whether CQL exhibits behavior similar to one-step RL by looking at the states where CQL takes actions that differ from the reward-maximizing actions (as determined by running Q-learning with unlimited data).
Since there are five total actions, a random policy would have a similarity score of 20\%. 
As shown in Fig.~\ref{fig:histogram}, the similarity score is significantly higher than chance for the vast majority of MDPs, showing that one-step RL and CQL($\lambda=10$) produce similar policies on most such gridworlds.

\paragraph{When does one-step RL approximate CQL?}
Because one-step RL is highly regularized (policy iteration is truncated after just one step), one might imagine that it would be most similar to CQL with a very large regularization coefficient. To study this, we use the same environment (Fig.~\ref{fig:cql-sarsa}) and measure the fraction of states where one-step RL and CQL choose the same argmax action. As shown in Fig.~\ref{fig:when}, one-step RL is most similar to CQL with \emph{moderate} regularization ($\lambda = 10$), and is less similar to CQL with a very strong regularization.

\begin{figure}[t]
    \centering
    \includegraphics[width=\linewidth]{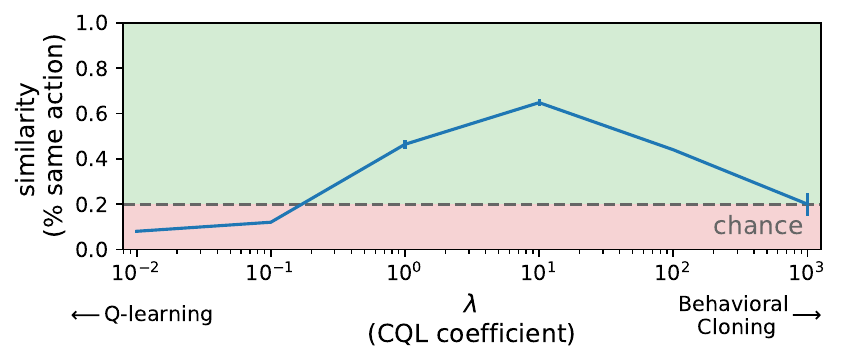}
    \vspace{-2em}
    \caption{ One-step RL is most similar to CQL with a moderate regularization coefficient.}
    \label{fig:when}
\end{figure}

\subsection{Predictions about Prior Methods: Continuous Control Setting}
\label{sec:experiments-cts}

Our final set of experiments studies whether our theoretical results can make accurate testable predictions about practically-used regularization methods in a setting where they are commonly used: offline RL benchmarks with continuous states and actions.
For these experiments, we will use well-tuned implementations of CQL and one-step RL from~\citet{hoffman2020acme}, using the default hyperparameters without modification. 
While our theoretical results do not apply directly to these practical methods, which violate the assumptions in our analysis, they nonetheless raise the questions of whether these methods perform similarly in practice.
We made one change to the one-step RL implementation to makethe comparison more fair:  because CQL learns two Q functions and takes the minimum (a trick introduced in~\citet{fujimoto2018addressing}), we applied this same parametrization to the one-step RL implementation.
Since offline RL methods can perform differently on datasets of varying quality~\citep{wang2020critic, fujimoto2021minimalist, paine2020hyperparameter, wang2021instabilities, fujimoto2019off}, we will repeat our experiments on four datasets from the D4RL benchmark~\citep{fu2020d4rl}. %

\paragraph{Lower bounds on Q-values.}
One oft-cited benefit of critic regularization is that it has guarantees about value-estimation~\citep{kumar2020conservative}: 
under appropriate assumptions, the learned value function will underestimate the discounted expected returns of the policy. Because our analysis shows a connection between one-step RL and critic regularization, it raises the question of whether one-step RL methods have similar value-estimation properties. Taken at face value, this hypothesis seems obvious: the behavioral critic estimates the value of the behavioral policy, so it should underestimate the value of any policy that is better than the behavioral policy. Despite this, the lower bound property of methods like one-step RL are rarely discussed, suggesting that it has yet to be widely appreciated.

\begin{figure}[t]
    \centering
    \includegraphics[width=\linewidth]{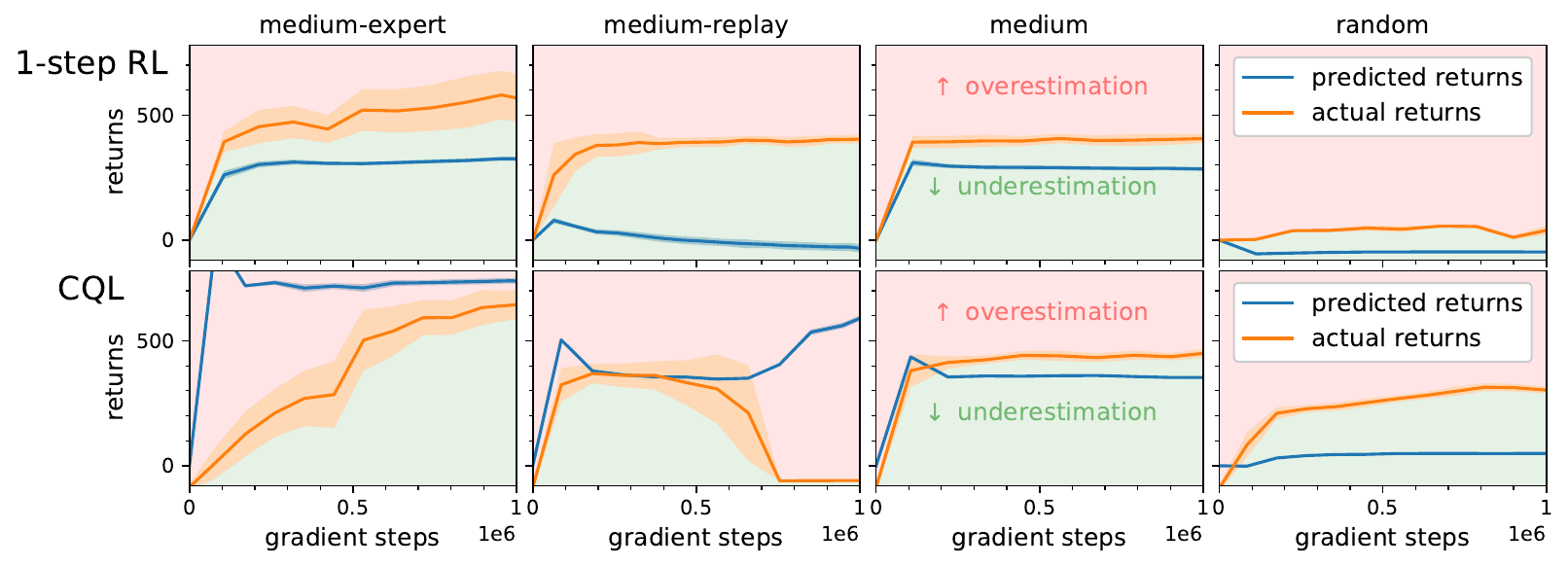}
    \caption{ \textbf{Q-value under/over-estimation.} \figtop \, Experiments on benchmark datasets of varying quality show that one-step RL underestimates the Q-values. \figbottom \, Despite the theoretical guarantees about critic regularization (CQL) yielding underestimates, in practice we observe that the values learned via critic regularization can sometimes overestimate the actual returns. We plot the mean and standard deviation across five random seeds.
    Note that the Q-values are equivalent to the value function $V^\pi(s)$.}
    \label{fig:q_vals}
\end{figure}

Fig.~\ref{fig:q_vals} shows both these predicted and actual (discounted) returns throughout the course of training.
The results for one-step RL confirm our theoretical prediction on $\nicefrac{4}{4}$ datasets: the Q-values from one-step RL underestimate the actual returns.
In contrast, we observe that critic regularization overestimates the true returns on $\nicefrac{2}{4}$ environments, perhaps because the regularization coefficients used to achieve good returns in practice are too weak to guarantee the lower bound property, and perhaps because the theoretical guarantees are only guaranteed to hold at convergence.
In total, these experiments confirm our theoretical predictions that one-step RL will result in Q-values that are underestimations, while also questioning the claim that critic regularization methods are always preferable for ensuring underestimation.

\paragraph{Critic regularization causes actor regularization.}
Our analysis in Sec.~\ref{sec:analysis} not only suggests that one-step RL methods might inherit properties of critic regularization (as studied in the previous section), but also suggests that critic regularization methods may behave like one-step methods. In particular, while critic regularization methods such as CQL do not explicitly regularize their actor, we hypothesize that they \emph{implicitly} regularize the actor (Lemma~\ref{lemma:critic-reg}), similar to how one-step RL methods explicitly regularize the actor.

\begin{figure}[t]
    \centering
    \includegraphics[width=\linewidth]{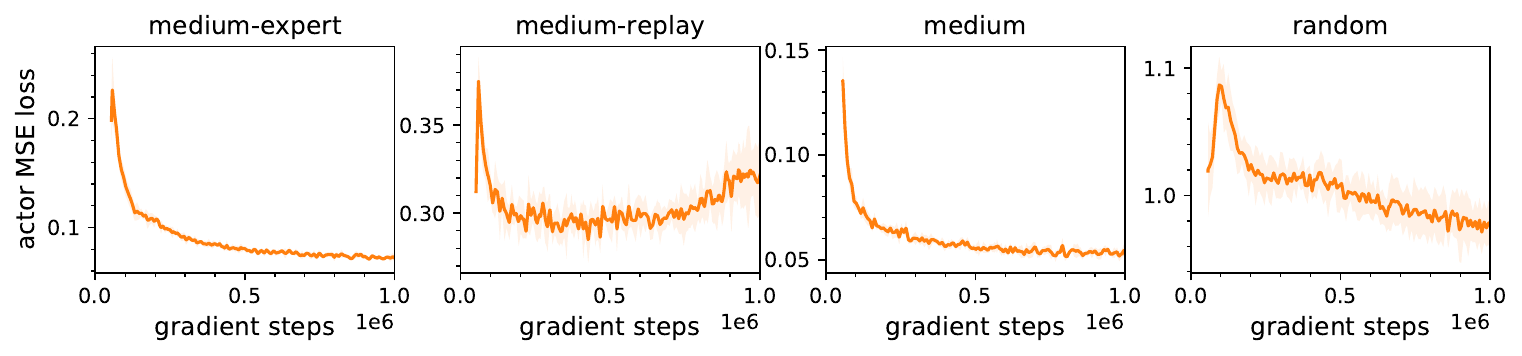}%
    \caption{ \textbf{Critic regularization causes actor regularization.} Performing critic regularization via CQL implicitly results in actor regularization, similar to one-step RL: the MSE between the predicted actions and the dataset actions decreases. We plot the mean and standard deviation across five random seeds.}
    \label{fig:mse-loss}
\end{figure}

We measure the MSE between the action in the dataset and the action predicted by the learned policy. Fig.~\ref{fig:mse-loss} shows the results. Some CQL implementations, including ours, ``warm-start'' the actor by applying a behavioral cloning loss for 50,000 iterations; we omit these initial gradient steps from our plots so that any effect is caused solely by the critic regularization. On \nicefrac{4}{4} datasets, we observe that the MSE between the CQL policy's actions and the actions in the datasets decreases throughout training. Perhaps the one exception is on the \texttt{medium-replay} dataset, where the MSE eventually starts to increase after 5e5 gradient steps. While directly regularizing the actor leads to MSE errors that are $\sim3\times$ smaller, this plot nevertheless provides evidence that critic regularization indirectly regularizes the actor. \looseness=-1

\section{Conclusion}

In this paper, we drew a connection between two seemingly-distinct RL regularization methods: one-step RL and critic regularization. While our analysis made assumptions that are typically violated in practice, it nonetheless made accurate, testable predictions about practical methods with commonly-used hyperparameters: critic regularization methods can behave like one-step methods, and vice versa. \looseness=-1

{
\paragraph{Acknowledgements.} We thank Aviral Kumar, George Tucker, David Brandfonbrener, and Scott Fujimoto for discussions and feedback on drafts of the paper. We thank Young Geng and Ilya Kostrikov for sharing code. We thank Chongyi Zheng for spotting a bug that prompted this project. This material is supported by the Fannie and John Hertz Foundation, NSF GRFP (DGE1745016), ONR N000142312368, DARPA/AFRL FA87502321015, and the Office of Naval Research

}

\clearpage
\appendix
\onecolumn

\section*{Appendices}

In the Appendices, we provide proofs of the theoretical results (Appendix~\ref{appendix:goals}), extend the analysis to other RL settings (Appendices~\ref{appendix:goals}--\ref{appendix:examples}), and then provide details of the experiments (Appendix~\ref{appendix:details}).

\section{Proofs}
\label{appendix:proofs}

\subsection{Proof of Lemma~\ref{lemma:1}}

\emph{Proof sketch.} Lemma~\ref{lemma:1} shows that classifier actor critic converges. The key idea of the proof will be to show that the incremental updates for classifier actor critic are exactly the same as the incremental updates for Q-learning. Q-learning converges, so an algorithm that performs the same incremental updates as Q-learning must also converge.

\begin{proof}
As the cross entropy loss is minimized when the predictions equal the labels, updates for $\gL_\text{critic}(Q, \pi)$ can be written as $\frac{Q(s, a)}{Q(s, a) + 1} \gets \frac{y^{\pi, Q_t}(s, a)}{y^{\pi, Q_t}(s, a) + 1}$. If the updates are performed by averaging over all possible next states (e.g., in the tabular setting), these updates are equivalent to directly updating $Q(s, a) \gets y^{\pi, Q_t}(s, a) = r(s, a) + \gamma \E_{p(s' \mid s, a)\pi(a' \mid s')}\left[Q_t(s', a') \right]$, which is the standard policy evaluation update for policy $\pi(a \mid s)$.
Thus, we can invoke the standard result that policy evaluation converges to $Q^\pi$~\citep[Theorem 1.14.]{agarwal2019reinforcement} to argue that updates for $\gL_\text{critic}$ likewise converge to $Q^\pi$.
\end{proof}

In this proof, the TD targets were the expectation over the next state and next action. If Eq.~\ref{eq:critic} were optimized using a single-sample estimate of this expectation, $\rvy = r(s, a) + \gamma Q_t(s', a')$, then the updates would be biased:
\begin{equation*}
\frac{Q(s, a)}{Q(s, a) + 1} \gets \E\left[\frac{\rvy}{\rvy + 1}\right] \le \frac{\E[\rvy]}{\E[\rvy] + 1} = \frac{y^{\pi, Q_t}(s, a)}{y^{\pi, Q_t}(s, a) + 1}.
\end{equation*}
In settings with stochastic transitions or policies, these updates would result in estimating a lower bound on $Q^\pi(s, a)$.

\subsection{Proof of Lemma~\ref{lemma:critic-reg} and Theorem~\ref{thm:main}}

\begin{proof}
Our proof proceeds in three steps. First, we derive the update equations for the regularized critic update. That is, if we maintained a table of Q-values, what would the new value for $Q(s, a)$ be? Second, we show that these updates are equivalent to performing policy evaluation on a \emph{re-parametrized} critic $\tilde{Q}(s, a) = Q(s, a) \frac{\pi(a \mid s)}{\beta(a \mid s)}$. We invoke the standard results for policy evaluation to prove convergence that $\tilde{Q}(s, a)$ convergences. Finally, we undo the reparametrization to obtain convergence results for $Q(s, a)$.

\textbf{Step 0.}
We start by rearranging the regularized critic objective:
\begin{align*}
    \gL_\text{critic}^r(Q, y^{\pi, Q_t})& \triangleq \gL_\text{critic}(Q, y^{\pi, Q_t}) + \bigg(\E_{p(s)\pi(a \mid s)}\left[ \log (Q(s, a) + 1)\right] - \E_{p(s)\beta(a \mid s)}\left[ \log (Q(s, a) + 1) \right] \bigg) \\
    &= -\E_{p(s, a)}\left[y^{\pi, Q_t}(s, a) \log \frac{Q(s, a)}{Q(s, a) + 1} + \log \frac{1}{Q(s, a) + 1} \right] \\
    & \qquad + \bigg(\E_{p(s)\pi(a \mid s)}\left[ \log (Q(s, a) + 1)\right] - \E_{p(s)\beta(a \mid s)}\left[ \log (Q(s, a) + 1) \right] \bigg) \\
    &= -\E_{p(s, a)}\left[y^{\pi, Q_t}(s, a) \log \frac{Q(s, a)}{Q(s, a) + 1} + \cancel{\log \frac{1}{Q(s, a) + 1}} \right] \\
    & \qquad - \bigg(\cancel{\E_{p(s)\pi(a \mid s)}\left[ \log \frac{1}{Q(s, a) + 1}\right]} + \E_{p(s)\beta(a \mid s)}\left[ \log \frac{1}{Q(s, a) + 1} \right] \bigg) \\
    &= -\E_{p(s, a)}\left[y^{\pi, Q_t}(s, a) \log \frac{Q(s, a)}{Q(s, a) + 1}\right] + \E_{p(s)\beta(a \mid s)}\left[ \log \frac{1}{Q(s, a) + 1} \right].
\end{align*}
For the cancellation on the third line, we used the fact that $p(s, a) = p(s)\beta(a \mid s)$.

\textbf{Step 1.} To start, note that the regularized critic update is equivalent to a weighted classification loss: positive examples are sampled $(s, a) \sim p(s)\beta(a \mid s)$ and receive weight $\frac{y^{\pi, Q_t}(s, a)}{y^{\pi, Q_t}(s, a) + 1}$, and negative examples are sampled $(s, a) \sim p(s)\pi(a \mid s)$ and receive weight $\frac{1}{y^{\pi, Q_t}(s, a) + 1}$. The Bayes' optimal classifier is given by
\begin{align*}
    \frac{Q(s, a)}{Q(s, a) + 1} = \frac{\frac{y^{\pi, Q_t}(s, a)}{y^{\pi, Q_t}(s, a) + 1}p(s)\beta(a \mid s)}{\frac{y^{\pi, Q_t}(s, a)}{y^{\pi, Q_t}(s, a) + 1}p(s)\beta(a \mid s) + \frac{1}{y^{\pi, Q_t}(s, a) + 1} p(s) \pi(a \mid s)} = \frac{y^{\pi, Q_t}(s, a) \beta(a \mid s)}{y^{\pi, Q_t}(s, a) \beta(a \mid s) + \pi(a \mid s)}.
\end{align*}
Solving for $Q(s, a)$ on the left hand side, the optimal value for $Q(s, a)$ is given by
\begin{equation}
    Q(s, a) = y^{\pi, Q_t}(s, a)\frac{\beta(a \mid s)}{\pi(a \mid s)} = (r(s, a) + \E_{p(s' \mid s, a) \pi(a' \mid s')}[Q_t(s', a')]) \frac{\beta(a \mid s)}{\pi(a \mid s)}. \label{eq:critic-reg-identity}
\end{equation}
This equation tells us what each update for the regularized critic loss does.

\textbf{Step 2.} To analyze these updates, we define $\tilde{Q}(s, a) \triangleq Q_t(s, a) \frac{\pi(a \mid s)}{\beta(a \mid s)}$. Then these updates can be written using $\tilde{Q}(s, a)$ as
\begin{equation}
    \tilde{Q}(s, a)\frac{\beta(a \mid s)}{\pi(a \mid s)} = \left(r(s, a) + \E_{p(s' \mid s, a) \pi(a' \mid s')}\left[\tilde{Q}(s', a')\frac{\beta(a' \mid s')}{\pi(a' \mid s')} \right] \right)\frac{\beta(a \mid s)}{\pi(a \mid s)},
\end{equation}
which can be simplified to
\begin{equation}
    \tilde{Q}(s, a) = r(s, a) + \E_{p(s' \mid s, a) \beta(a' \mid s')}\left[\tilde{Q}(s', a')\right].
\end{equation}
Note that the ratio $\frac{\beta(a' \mid s')}{\pi(a' \mid s')}$ inside the expectation acts like an importance weight, so that the expectation over $\pi(a' \mid s')$ becomes an expectation over $\beta(a' \mid s')$.
Thus, the regularized critic updates are equivalent to perform policy evaluation on $\tilde{Q}(s, a)$. An immediately consequence is that the regularized critic updates converge, and they converge to $\tilde{Q}^*(s, a) = Q^\beta(s, a)$.

\textbf{Step 3.} Finally, we translate these convergence results for $\tilde{Q}(s, a)$ into convergence results for $Q(s, a)$. Written in terms of the original Q-values, we see that the optimal critic for the regularized critic update is
\begin{equation}
    Q^*(s, a) = \tilde{Q}^*(s, a) \frac{\beta(a \mid s)}{\pi(a \mid s)} = Q^\beta(s, a) \frac{\beta(a \mid s)}{\pi(a \mid s)}.
\end{equation}
This completes the proof of Lemma~\ref{lemma:critic-reg}.

\end{proof}
We now prove Theorem~\ref{thm:main} by applying a logarithm:
\begin{proof}
\begin{equation*}
\log Q^*(s, a) = \log \left( Q^\beta(s, a) \frac{\beta(a \mid s)}{\pi(a \mid s)} \right) = \log Q^\beta(s, a) + \log \beta(a \mid s) - \log \pi(a \mid s).
\end{equation*}

\end{proof}

We note that our proof does not account for stochastic and function approximation errors. However, if we assume that the TD updates are deterministic (e.g., as they are in deterministic MDPs), then the updates for classifier actor-critic are identical to those of Q-learning (Lemma~\ref{lemma:1}). Thus, it immediately inherits any theoretical results regarding the propagation of errors for Q-learning.

While this Theorem 4.3 shows that one-step RL and critic regularization have the same fixed point, it does not say how many transitions or gradient updates are required to reach those fixed points.

\subsection{Why use the cross-entropy loss?}

Our proof of Theorem~\ref{thm:main} helps explain why classifier actor-critic use the cross entropy loss for the critic loss, rather than the MSE loss. Precisely, our analysis requires that the optimal Q function be a \emph{ratio}, $\tilde{Q}(s, a) = \frac{Q(s, a) \pi(a \mid s)}{\beta(a \mid s)}$.
The cross entropy loss can readily estimate ratios. For example, the optimal classifier for data drawn from $p(x)$ and $q(x)$ is $C(x) = \frac{p(x)}{p(x) + q(x)}$, so the ratio can be expressed as $\frac{C(x)}{1 - C(x)} = \frac{p(x)}{q(x)}$.
However, fitting a function $C(x)$ to data drawn from (say) a 1:1 mixture of $p(x)$ and $q(x)$ would result in $C(x) = \frac{1}{2}p(x) + \frac{1}{2}q(x)$, which we cannot transform to express the ratio $\frac{p(x)}{q(x)}$ as a function of $C(x)$.

\begin{wrapfigure}[13]{R}{0.5\textwidth}
\centering
\vspace{-2em}
\includegraphics[width=0.9\linewidth]{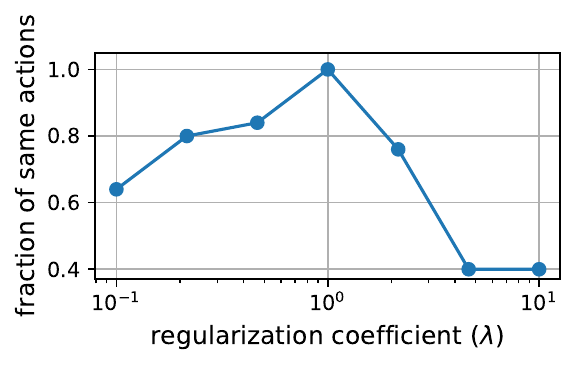}
\vspace{-1em}
\caption{{Under the assumptions of Theorem~\ref{thm:main}, one-step RL is most similar to critic regularization with a coefficient of $\lambda = 1$.}}\label{fig:tab-similarity}
\end{wrapfigure}

\subsection{Validating the Theory}

Our theoretical results suggest that one-step RL and critic regularization should be most similar with critic regularization is applied with a regularization coefficient of $\lambda = 1$. To test this hypothesis, we took the task from Fig.~\ref{fig:gridworld} \emph{(Left)} and measured the similarity between one-step RL and critic-regularized classifier actor critic, for varying values of the critic regularization parameter. We measured the similarity of the policies obtained by the two methods by counting the fraction of states where the two methods choose the same (argmax) action. The results, shown in Fig.~\ref{fig:tab-similarity}, validate our theoretical prediction that these methods should be most similar with $\lambda = 1$.

\subsection{What about using the policy gradient?}
Our analysis fundamentally requires using TD learning: the key step is that doing TD backups using one policy is equivalent to doing (modified) TD backups with a different policy. However, the actor updates for both methods could be implemented using a policy gradient or natural gradient, rather than a straight-through gradient estimator. Indeed, much of the work on one-step RL methods~\citep{peng2019advantage, siegel2020keep} uses an actor update that resembles a policy gradient or natural policy gradient (e.g., 1-step RL with a reverse KL penalty~\citep{brandfonbrener2021offline}).

\section{Varying the regularization coefficient}
While our main analysis (Theorem~\ref{thm:main})showed that regularization and critic regularization yield the same policy when these regularizers are applied with a certain strength, in practice the strength of regularization is controlled by a hyperparameter. This hyperparameter raises a question: \emph{does the connection between one-step RL and critic regularization hold for different values of this hyperparameter?}

In this section, we show that there remains a precise connection between actor and critic regularization, even for different values of this hyperparameter. This result not only suggests that the connection is stronger than initially suggested by the main result. Proving this connection also helps highlight how many regularization methods can be cast from a similar mold.

\subsection{A Regularization Coefficient.}
\label{appendix:lambda}

We start by modifying the actor regularizer and critic regularizer introduced in Sec.~\ref{sec:classifier-ac} to include an additional hyperparameter.

\paragraph{Mixture policy.}
Both the actor and critic losses will make use of a mixture policy, $(1 - \lambda) \pi(a \mid s) + \lambda \beta(a \mid s)$, where $\lambda \in [0, 1]$ will be a hyperparameter. Larger values of $\lambda$ yield a mixture policy that is closer to the behavioral policy; this will correspond to higher degrees of regularization. Mixtures of policies are commonly used in practice~\citep[Appendix F]{kumar2020conservative},\citep[Eq.~11]{villaflor2020fine},~\citep[Sec.~4.3]{finn2016guided}~\citep{lyu2022efficient}~\citep[Eq.~2.5]{hazan2019provably}, even though it rarely appears in theoretical offline RL literature. Indeed, because critic regularization resembles a two-player zero-sum game, mixture policies might even be \emph{required} to find a (Nash) equilibrium of the critic regularizer~\citep{nash1951non}.

\paragraph{$\lambda$-weighted critic loss.}
With this concept of a mixture policy, we define the $\lambda$-weighted actor and critic regularizers.
For the $\lambda$-weighted critic loss, we will change how the TD targets are computed. Instead of sampling the next action from $\pi$ or $\beta$, we will sample the next action from a $\lambda_\text{TD}$-weighted combination of these two policies, reminiscent of how prior work has regularized the actions sampled for the TD backup~\citep{fujimoto2019off, zhou2020plas}:
\begin{equation*}
    y^{\lambda_\text{TD}} \triangleq y^{(1 - \lambda) \pi + \lambda \beta}(s, a) = r(s, a) + \gamma \E_{\substack{p(s' \mid s, a) \\ (1 - \lambda_\text{TD}) \pi(a \mid s) + \lambda_\text{TD} \beta(a \mid s)}}[Q(s', a')].
\end{equation*}
When introducing one-step RL in Sec.~\ref{sec:classifier-ac}, we used $\lambda_\text{TD} = 1$.

Using this TD target, the $\lambda$-weighted critic loss can now be written as a  combination of the un-regularized objective (Eq.~\ref{eq:critic}) plus the regularized objective (Eq.~\ref{eq:critic-regularized}):
\begin{align}
    \gL_\text{critic}^r(Q, \lambda_\text{critic}) &\triangleq (1 - \lambda_\text{critic}) \left(-\E_{p(s, a)}\left[\frac{y^{\lambda_\text{TD}}(s, a)}{y^{\lambda_\text{TD}}(s, a) + 1} \log \frac{Q(s, a)}{Q(s, a) + 1} + \frac{1}{y^{\lambda_\text{TD}}(s, a) + 1} \log \frac{1}{Q(s, a) + 1} \right] \right) \nonumber \\
    & \qquad + \lambda \left(-\E_{\substack{p(s, a)\\a^- \sim \pi(\cdot \mid s)}} \left[\frac{y^{\lambda_\text{TD}}(s, a)}{y^{\lambda_\text{TD}}(s, a) + 1} \log \frac{Q(s, a)}{Q(s, a) + 1} + \frac{1}{y^{\lambda_\text{TD}}(s, a) + 1} \log \frac{1}{Q(s, a) + 1} \right] \right) \nonumber \\
    &= -\E_{\substack{p(s, a)\\a^- \sim (1 - \lambda_\text{critic})\pi(\cdot \mid s) + \lambda_\text{critic}\beta(\cdot \mid s)}}\left[\frac{y^{\lambda_\text{TD}}(s, a)}{y^{\lambda_\text{TD}}(s, a) + 1} \log \frac{Q(s, a)}{Q(s, a) + 1} + \frac{1}{y^{\lambda_\text{TD}}(s, a) + 1} \log \frac{1}{Q(s, a^-) + 1} \right]. \label{eq:critic-lambda}
\end{align}
The second line rewrites this objective: the first term looks the same as the original ``positive'' term in the critic objective, while the ``negative'' term uses actions sampled from a mixture of the current policy and the behavioral policy. When $\lambda_\text{critic} = 1$, we recover the regularized critic loss introduced in Sec.~\ref{sec:classifier-ac}.

\paragraph{$\lambda$-weighted actor loss.}
Finally, the strength of the actor regularizer can be controlled by changing the reverse KL penalty. While it may seem like changing the reward scale would varying the strength of the actor loss, this is not the case for classifier actor critic because of the $\log(\cdot)$ in the actor loss. Instead, we will relax the reverse KL penalty between the learned policy $\pi(a \mid s)$ and the behavioral policy $\beta(a \mid s)$ so that only the mixture policy only needs to be close to behavioral policy:
\begin{align}
   \gL_\text{actor}^r(\pi, \lambda_\text{KL}) &\triangleq \E_{p(s) \pi(a \mid s)}\left[\log Q(s, a) + \log \beta(a \mid s) - \log \left((1 - \lambda_\text{KL}) \pi(a \mid s) + \lambda_\text{KL} \beta(a \mid s) \right) \right]. \label{eq:actor-lambda}
\end{align}
As indicated on the second line, replacing $\beta(a \mid s)$ with the mixture policy has an effect similar to that of decreasing the weight applied to the KL penalty. The approximation on the second line is determined by the Jensen Gap~\citep{abramovich2016some, gao2017bounds}.
When introducing one-step RL in Sec.~\ref{sec:classifier-ac}, we used $\lambda_\text{KL} = 1$, together with $\lambda_\text{TD} = 1$.

In summary, the strength of the actor and critic regularizers can be controlled through additional hyperparameters ($\lambda_\text{critic}, \lambda_\text{TD}, \lambda_\text{KL}$). Indeed, it is typical for offline RL methods to require many hyperparameters~\citep{brandfonbrener2021offline, lu2021revisiting, paine2020hyperparameter, wu2019behavior}, and performance is sensitive to their settings.
However, the close connection that we have shown between actor and critic regularizers allows us to decrease the number of hyperparameters. 

\subsection{Analysis}

In our main result (Thm.~\ref{thm:main}), we showed that one-stel RL and critic regularization are equivalent when $\lambda_\text{critic} = \lambda_\text{TD} = \lambda_\text{KL} = 1$. This is a large value for the regularization strength, and we now consider what happens for smaller degrees of regularization: is there still a connection between one-step RL and critic regularization?

The following theorem will prove that this is the case. In particular, applying critic regularization with coefficient $\lambda_\text{critic}$ yields the same policy as applying one-step RL with $\lambda_\text{TD} = \lambda_\text{KL} = \lambda_\text{critic}$. That is, there is a very simple recipe for converting the hyperparameters for critic regularization into the hyperparameters for one-step RL.

\begin{theorem} \label{thm:lambda}
Let policy $\pi(a \mid s)$ be given, let $Q^\beta(s, a)$ be the Q-function of the behavioral policy, and let $Q_r^{\lambda_\text{TD}}(s, a, \lambda_\text{critic})$ be the critic obtained by the $\lambda_\text{critic}$-weighted regularized critic update (Eq.~\ref{eq:critic-lambda}) using TD targets $y^{\lambda_\text{TD}}(s, a)
$. If $\lambda_\text{critic} = \lambda_\text{TD} = \lambda_\text{KL}$, then the $\lambda_\text{KL}$-weighted actor loss (Eq.~\ref{eq:actor-lambda}) is equivalent to the un-regularized policy objective using the regularized critic:
\begin{align*}
    & \E_{p(s) \pi(a \mid s)}\left[\log Q(s, a) + \log \beta(a \mid s) - \log \left((1 - \lambda_\text{KL}) \pi(a \mid s) + \lambda_\text{KL} \beta(a \mid s) \right) \right] \\
    &= \E_{\pi(a \mid s)}\left[\log Q_r^{\lambda_\text{TD}}(s, a, \lambda_\text{critic})\right]  \qquad \text{for all states }s.
\end{align*}
\end{theorem}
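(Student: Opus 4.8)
The plan is to follow the same Step~0--3 template as the proof of Lemma~\ref{lemma:critic-reg}, now carrying the three coefficients $\lambda_\text{critic},\lambda_\text{TD},\lambda_\text{KL}$ and invoking the hypothesis that they coincide. Write $\mu(a\mid s) \triangleq (1-\lambda_\text{critic})\pi(a\mid s) + \lambda_\text{critic}\beta(a\mid s)$ for the effective ``negative-sampling'' distribution appearing in the $\lambda$-weighted critic loss (Eq.~\ref{eq:critic-lambda}). First I would rearrange that loss into a weighted binary-classification objective just as in the proof of Lemma~\ref{lemma:critic-reg}: positives drawn from $p(s)\beta(a\mid s)$ with weight $\tfrac{y^{\lambda_\text{TD}}}{y^{\lambda_\text{TD}}+1}$, negatives drawn from $p(s)\mu(a\mid s)$ with weight $\tfrac{1}{y^{\lambda_\text{TD}}+1}$. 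The Bayes-optimal classifier is $\tfrac{Q}{Q+1} = \tfrac{y^{\lambda_\text{TD}}\beta}{y^{\lambda_\text{TD}}\beta + \mu}$, and solving for $Q$ gives the per-state update
\[
 Q_{t+1}(s,a) \;=\; y^{\lambda_\text{TD},Q_t}(s,a)\,\frac{\beta(a\mid s)}{\mu(a\mid s)},
\]
the $\mu$-analogue of Eq.~\ref{eq:critic-reg-identity}.

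Second, I would reparametrize $\tilde Q_t(s,a)\triangleq Q_t(s,a)\,\tfrac{\mu(a\mid s)}{\beta(a\mid s)}$. Multiplying the update through by $\tfrac{\mu}{\beta}$ and substituting $Q_t = \tilde Q_t\,\tfrac{\beta}{\mu}$ inside the TD target $y^{\lambda_\text{TD},Q_t}$ --- whose backup samples $a'$ from $(1-\lambda_\text{TD})\pi(\cdot\mid s')+\lambda_\text{TD}\beta(\cdot\mid s')$ --- yields
\[
 \tilde Q_{t+1}(s,a) = r(s,a) + \gamma\,\E_{\substack{p(s'\mid s,a)\\(1-\lambda_\text{TD})\pi(a'\mid s')+\lambda_\text{TD}\beta(a'\mid s')}}\!\left[\tilde Q_t(s',a')\,\frac{\beta(a'\mid s')}{(1-\lambda_\text{critic})\pi(a'\mid s')+\lambda_\text{critic}\beta(a'\mid s')}\right].
\]
This is the crux: when $\lambda_\text{TD}=\lambda_\text{critic}$ the sampling distribution for $a'$ equals $\mu$, so the ratio $\tfrac{\beta}{\mu}$ is exactly the importance weight turning $\E_{\mu}[\cdot]$ into $\E_{\beta}[\cdot]$, and the recursion collapses to the ordinary policy-evaluation backup for $\beta$, namely $\tilde Q_{t+1}(s,a) = r(s,a) + \gamma\E_{p(s'\mid s,a)\beta(a'\mid s')}[\tilde Q_t(s',a')]$. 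That operator is the usual $\gamma$-contraction, so by the standard policy-evaluation convergence result (as used in Lemma~\ref{lemma:1}) $\tilde Q_t\to Q^\beta$; undoing the reparametrization gives $Q_r^{\lambda_\text{TD}}(s,a,\lambda_\text{critic}) = Q^\beta(s,a)\,\tfrac{\beta(a\mid s)}{(1-\lambda_\text{critic})\pi(a\mid s)+\lambda_\text{critic}\beta(a\mid s)}$, the $\lambda$-weighted analogue of Lemma~\ref{lemma:critic-reg}.

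Third, Theorem~\ref{thm:lambda} follows, just as Theorem~\ref{thm:main} followed from Lemma~\ref{lemma:critic-reg}, by taking logarithms and setting $\lambda_\text{KL}=\lambda_\text{critic}$:
\[
 \log Q_r^{\lambda_\text{TD}}(s,a,\lambda_\text{critic}) = \log Q^\beta(s,a) + \log\beta(a\mid s) - \log\!\big((1-\lambda_\text{KL})\pi(a\mid s)+\lambda_\text{KL}\beta(a\mid s)\big),
\]
which is precisely the bracketed integrand of the $\lambda_\text{KL}$-weighted actor loss (Eq.~\ref{eq:actor-lambda}, with $Q^\beta$ in the role of $Q$). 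Taking $\E_{\pi(a\mid s)}$ (and, for the dataset-averaged form, $\E_{p(s)}$) of both sides gives the claimed identity, pointwise in $s$. Throughout I would keep the standing hypotheses of Lemma~\ref{lemma:critic-reg} --- tabular state/action spaces, positive rewards so all logarithms are well defined, exact TD targets --- together with $\pi$ having support contained in that of $\beta$ and $\lambda_\text{critic}\in(0,1]$, so that $\beta/\mu$ is finite.

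The step I expect to be the main obstacle is the middle one: carefully deriving the fixed-point form of the $\lambda$-weighted critic update from Eq.~\ref{eq:critic-lambda} --- correctly tracking how the split of the regularizer's negative term into a $p(s)\beta$ part and a $p(s)\pi$ part becomes the single denominator $\mu$ in the Bayes classifier, and keeping the weight on each negative log-term attached to the right action --- and then confirming that the lone algebraic cancellation in the reparametrized recursion genuinely forces $\lambda_\text{TD}=\lambda_\text{critic}$, with $\lambda_\text{KL}$ entering only at the final, cosmetic substitution. Everything else is bookkeeping identical to the $\lambda=1$ case.
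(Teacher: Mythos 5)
Your proposal is correct and follows essentially the same route as the paper's proof: derive the Bayes-optimal classifier for the $\lambda$-weighted critic loss, solve for $Q$ to get the fixed point $y^{\lambda_\text{TD}}\beta/\mu$, reparametrize by $\tilde{Q} = Q\,\mu/\beta$ so that the importance weight cancels the $\lambda_\text{TD}$-mixture sampling distribution exactly when $\lambda_\text{TD}=\lambda_\text{critic}$, conclude $\tilde{Q}\to Q^\beta$ by standard policy evaluation, and take logarithms to recover the $\lambda_\text{KL}$-weighted actor regularizer. Your explicit statement of the standing hypotheses (tabular setting, positive rewards, support containment, $\lambda_\text{critic}\in(0,1]$ so $\beta/\mu$ is finite) is a minor addition that the paper leaves implicit.
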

While we used the cross entropy loss for this result, it turns out that the result also holds for the more standard MSE loss (we omit the proof for brevity). 

\paragraph{Limitations.}
Before presenting the proof in Sec.~\ref{sec:lambda-proof}, we discuss a few limitations of this result. Like the rest of the analysis in this paper, the form of the critic regularizer is different from that often used in practice. Additionally, our analysis assumes ignores many sources of errors (e.g., sampling, function approximation), and assumes that each objective is optimized exactly.

\subsection{Proof of Theorem~\ref{thm:lambda}}
\label{sec:lambda-proof}

\begin{proof}
We start by defining the fixed point of the $\lambda$-weighted regularized critic loss. Like in the single-task setting, this loss resembles a weighted classification problem, so we can write down the Bayes' optimal classifier as 
\begin{align*}
    \frac{Q(s, a)}{Q(s, a) + 1} &= \frac{\frac{y^{\lambda_\text{TD}}(s, a)}{y^{\lambda_\text{TD}}(s, a) + 1}p(s)\beta(a \mid s)}{\frac{y^{\lambda_\text{TD}}(s, a)}{y^{\lambda_\text{TD}}(s, a) + 1}p(s)\beta(a \mid s) + \frac{1}{y^{\lambda_\text{TD}}(s, a) + 1} p(s) ((1 - \lambda_\text{critic})\pi(a \mid s) + \lambda_\text{critic} \beta(a \mid s))} \\
    &= \frac{y^{\lambda_\text{TD}}(s, a) \beta(a \mid s)}{y^{\lambda_\text{TD}}(s, a) \beta(a \mid s) + (1 - \lambda_\text{critic})\pi(a \mid s) + \lambda_\text{critic} \beta(a \mid s)}.
\end{align*}
Solving for $Q(s, a)$ on the left hand side, the optimal value for $Q(s, a)$ is given by
\begin{align}
    Q(s, a) &= y^{\lambda_\text{TD}}(s, a)\frac{\beta(a \mid s)}{(1 - \lambda_\text{critic})\pi(a \mid s) + \lambda_\text{critic} \beta(a \mid s)} \nonumber \\
    &= (r(s, a) + \E_{p(s' \mid s, a), a' \sim (1 - \lambda_\text{TD}), \pi(\cdot \mid s') + \lambda_\text{TD} \beta(\cdot \mid s)}[Q(s', a')]) \frac{\beta(a \mid s)}{(1 - \lambda_\text{critic})\pi(a \mid s) + \lambda_\text{critic} \beta(a \mid s)}. \label{eq:mixture-1}
\end{align}
Note that the next action $a'$ is sampled from a mixture policy defined by $\lambda_\text{TD}$. This equation tells us what each update for the $\lambda$-weighted regularized critic loss does.

To analyze these updates, we define
\begin{equation*}
\tilde{Q}(s, a) \triangleq Q(s, a) \frac{(1 - \lambda_\text{critic})\pi(a \mid s) + \lambda_\text{critic} \beta(a \mid s)}{\beta(a \mid s)}.
\end{equation*}

Like before, the ratio $\frac{\beta(a' \mid s')}{(1 - \lambda_\text{TD})\pi(a' \mid s') + \lambda_\text{TD} \beta(a' \mid s')}$ can act like an importance weight. When $\lambda_\text{TD} = \lambda_\text{critic}$, then this importance weight cancels with the sampling distribution, providing the following identity:
\begin{align*}
    & \E_{p(s' \mid s, a), a' \sim (1 - \lambda_\text{TD}), \pi(\cdot \mid s') + \lambda_\text{TD} \beta(\cdot \mid s)}[Q(s', a')] \\
    &= \E_{p(s' \mid s, a), a' \sim (1 - \lambda_\text{TD}), \pi(\cdot \mid s') + \lambda_\text{TD} \beta(\cdot \mid s)} \left[\tilde{Q}(s, a)\frac{\beta(a \mid s)}{(1 - \lambda_\text{critic})\pi(a \mid s) + \lambda_\text{critic} \beta(a \mid s)} \right] \\
    &= \E_{p(s' \mid s, a), a' \sim \beta(\cdot \mid s')}[\tilde{Q}(s, a)].
\end{align*}
Substituting this identity in Eq.~\ref{eq:mixture-1}, we can write the updates using $\tilde{Q}(s, a)$:
\begin{align*}
    & \tilde{Q}(s, a)\frac{\beta(a \mid s)}{(1 - \lambda_\text{critic})\pi(a \mid s) + \lambda_\text{critic} \beta(a \mid s)} \\
    & = \left(r(s, a) + \E_{p(s' \mid s, a), a' \sim \beta(\cdot \mid s')}[\tilde{Q}(s, a)] \right)\frac{\beta(a \mid s)}{(1 - \lambda_\text{critic})\pi(a \mid s) + \lambda_\text{critic} \beta(a \mid s)},
\end{align*}
which can be simplified to
\begin{align*}
    \tilde{Q}(s, a) = r(s, a) + \E_{p(s' \mid s, a), a' \sim \beta(\cdot \mid s')}[\tilde{Q}(s, a)].
\end{align*}

We then translate these convergence results for $\tilde{Q}(s, a)$ into convergence results for $Q(s, a)$. Written in terms of the original Q-values, we see that the optimal critic for the regularized critic update is
\begin{equation}
    Q^*(s, a) = Q^\beta(s, a) \frac{\beta(a \mid s)}{(1 - \lambda_\text{critic})\pi(a \mid s) + \lambda_\text{critic} \beta(a \mid s)}.
\end{equation}
Note that this holds for any value of $\lambda_\text{critic} = \lambda_\text{TD} \in [0, 1]$. This result suggests that two common forms of regularization, decreasing the values predicted at unseen actions and regularizing the actions used in the TD backup, can produce the same effect: a critic that estimates the Q-values of the behavioral policy (multiplied by some importance weight).

Finally, substitute this Q-function into the un-regularized actor loss, we see that the result is equivalent to the $\lambda$-weighted actor loss:
\begin{align*}
   \E_{p(s) \pi(a \mid s)}\left[\log Q^*(s, a) \right] = & \E_{p(s) \pi(a \mid s)}\bigg[\log Q^\beta(s, a) + \underbrace{\log \beta(a \mid s) - \log \left((1 - \lambda_\text{KL}) \pi(a \mid s) + \lambda_\text{KL} \beta(a \mid s) \right)}_{\text{$\lambda$-weighted actor regularizer}}\bigg]
\end{align*}
\end{proof}

\section{Regularization for Goal-Conditioned Problems}
\label{appendix:goals}

Like single-task RL problems, goal-conditioned RL problems have also been approached with both one-step methods~\citep{ghosh2020learning, ding2019goal, sun2019policy} and critic regularization~\citep{chebotar2021actionable}.
In these problems, the aim is to learn a goal-conditioned policy $\pi(a \mid s, s_g)$ that maximizes the expected discounted sum of goal-conditioned rewards $r_g(s, a)$, where goals are sampled $s_g \sim p_g(s_g)$:
\begin{equation*}
    \max_\pi \E_{p_g(s_g)}\E_{\pi(\tau \mid s_g)}\left[\sum_{t=0}^\infty \gamma^t r_g(s_t, a_t) \right].
\end{equation*}
We will use the goal-conditioned reward function $r_g(s, a) = p(s' = s_g \mid s, a)$, which is defined in terms of the environment dynamics. In settings with discrete states, maximizing this reward function is equivalent to maximizing the sparse indicator reward function ($r_g(s, a) = \mathbbm{1}(s_g = s)$).

In this section, we show that one-step RL and critic regularization are equivalent for a certain goal-conditioned actor-critic method. Unlike our analysis in the single-task setting, this analysis here uses an existing method, C-learning~\citep{eysenbach2020c}. C-learning is a TD method that already makes use of the cross entropy loss for training the critic:
\begin{align*}
    \max_Q \; & (1 - \gamma) \E_{p(s, a, s')}\left[\log \frac{Q(s, a, s_g=s')}{Q(s, a, s_g=s') + 1} \right] + \gamma \E_{p(s, a) p_g(s_g)}\left[ y^{\pi, Q_t}(s, a, s_)  \log \frac{Q(s, a, s_g)}{Q(s, a, s_g) + 1} \right] \\
    & + \E_{p(s, a)p_g(s_g)}\left[\log \frac{1}{Q(s, a, s_g = s') + 1} \right],
\end{align*}
where $y^{\pi, Q_t}(s, a, s_g) = \E_{p(s' \mid s, a)\pi(a' \mid s', s_g)}\left[Q(s', a', s_g)\right]$ serves the role of the TD target.

The first two terms increase the Q-values while the last term decreases the Q-values. The actor is updated to maximize the Q-values. While this objective for the actor can be written in many ways, we will write it as maximizing a log ratio because it will allow us to draw a precise equivalence between actor and critic regularization:
\begin{equation*}
    \max_\pi \E_{p_g(s_g) p(s) \pi(a \mid s, s_g)} \left[ \log Q(s, a, s_g) \right]
\end{equation*}

We will now consider variants of C-learning that incorporate actor and critic regularization.

\paragraph{One-step RL.}
We will consider a variant of C-learning that resembles one-step RL~\citep{brandfonbrener2021offline}. The critic update will be similar to before, but the next-actions sampled for the TD updates will be sampled from the \emph{marginal} behavioral policy:
\begin{align*}
    \max_Q \; & (1 - \gamma) \E_{p(s, a, s')}\left[\log \frac{Q(s, a, s_g=s')}{Q(s, a, s_g=s') + 1} \right] + \gamma \E_{p(s, a) p_g(s_g)}\left[ y^{\beta, Q_t}(s, a, s_)  \log \frac{Q(s, a, s_g)}{Q(s, a, s_g) + 1} \right] \\
    & + \E_{p(s, a)p_g(s_g)}\left[\log \frac{1}{Q(s, a, s_g = s') + 1} \right],
\end{align*}
where $y^{\beta, Q_t}(s, a, s_g) = \E_{p(s' \mid s, a)\beta(a' \mid s')}[Q_t(s', a', s_g)]$. The actor update will be modified to include a reverse KL divergence:
\begin{equation}
    \max_\pi \E_{p(s)p_g(s_g)\pi(a \mid s, s_g)}\left[\log Q(s, a, s_g) + \log \beta(a \mid s) - \pi(a \mid s, s_g) \right]. \label{eq:gcrl-policy-regularized}
\end{equation}

Note that we are regularizing the policy to be similar to the average behavioral policy, $\beta(a \mid s)$. Compared to regularization towards a goal-conditioned behavioral policy $\beta(a \mid s, s_g)$, this choice gives the policy additional flexibility: when trying to reach goal $s_g$, it is allowed to take actions that were not taken by $\beta(a \mid s, s_g)$, as long as they were taken by the behavioral policy when trying to reach some other goal $s_g'$.

\paragraph{Critic regularization.}

To regularize the critic, we will modify the ``negative'' term in the C-learning objective to use actions sampled from the policy:
\begin{align}
    \max_Q \; & (1 - \gamma) \E_{p(s, a, s')}\left[\log \frac{Q(s, a, s_g=s')}{Q(s, a, s_g=s') + 1} \right] \\
    & + \gamma \E_{p(s, a) p_g(s_g)}\left[ y^{\pi, Q_t}(s, a, s_g)  \log \frac{Q(s, a, s_g)}{Q(s, a, s_g) + 1} \right] \\
    & + \E_{p(s)p_g(s_g), a \sim \pi(\cdot \mid s, s_g)}\left[\log \frac{1}{Q(s, a, s_g) + 1} \right]. \label{eq:gcrl-critic-regularized}
\end{align}

\subsection{Analysis for Goal-Conditioned Problems}

Like in the single-task setting, these two forms of regularization yield the same fixed points:
\begin{theorem} \label{thm:gcrl}
Let policy $\pi(a \mid s, s_g)$ be given, let $Q^\beta(s, a, s_g)$ be the Q-values for the marginal behavioral policy $\beta(a \mid s)$ and let $Q_r^\pi(s, a, s_g)$ be the critic obtained by the regularized critic update (Eq.~\ref{eq:gcrl-critic-regularized}). Then performing regularized policy updates (Eq.~\ref{eq:gcrl-policy-regularized}) using the behavioral critic is equivalent to the un-regularized policy objective using the regularized critic:
\begin{equation*}
    \E_{\pi(a \mid s, s_g)}\left[\log Q^\beta(s, a, s_g) + \log \beta(a \mid s) - \log \pi(a \mid s, s_g) \right] = \E_{\pi(a \mid s, s_g)}\left[\log Q_r^\pi(s, a, s_g)\right]
\end{equation*}
for all states $s$ and goals $s_g$.
\end{theorem}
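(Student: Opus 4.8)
The plan is to follow the same three-step template used for Lemma~\ref{lemma:critic-reg} and Theorem~\ref{thm:main}, now with the goal-conditioned C-learning objective in place of classifier actor critic. First I would write down the fixed point of the regularized critic update (Eq.~\ref{eq:gcrl-critic-regularized}). As in C-learning, this objective is a weighted binary classification of $C = Q/(Q+1)$: the ``positive'' class is a mixture of a reward term, drawing $(s, a, s_g)$ with density proportional to $(1-\gamma)\, p(s, a)\, p(s_g \mid s, a)$, and a bootstrap term with density proportional to $\gamma\, p(s, a)\, p_g(s_g)\, y^{\pi, Q_t}(s, a, s_g)$; the only change from the unregularized objective is that the ``negative'' class now draws $(s, a, s_g) \sim p(s)\, p_g(s_g)\, \pi(a \mid s, s_g)$ instead of $p(s, a)\, p_g(s_g)$. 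Writing $p(s, a) = p(s)\beta(a \mid s)$ and taking the Bayes-optimal classifier, solving $Q = C/(1-C)$ yields a per-update identity of the form
\[
Q(s, a, s_g) = \frac{\beta(a \mid s)}{\pi(a \mid s, s_g)}\left(\frac{(1-\gamma)\, p(s_g \mid s, a)}{p_g(s_g)} + \gamma\, y^{\pi, Q_t}(s, a, s_g)\right),
\]
where the $\beta/\pi$ ratio appears because both positive terms carry the factor $\beta(a \mid s)$ while only the negative term carries $\pi(a \mid s, s_g)$.

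Next I would introduce the reparametrization $\tilde Q(s, a, s_g) \triangleq Q(s, a, s_g)\, \pi(a \mid s, s_g)/\beta(a \mid s)$. Substituting into the identity above and expanding $y^{\pi, Q_t}(s, a, s_g) = \E_{p(s' \mid s, a)\pi(a' \mid s', s_g)}[Q_t(s', a', s_g)]$, the leading factor $\beta/\pi$ cancels on both sides, and the ratio $\beta(a' \mid s')/\pi(a' \mid s', s_g)$ appearing inside the expectation over $\pi(a' \mid s', s_g)$ acts as an importance weight that turns it into an expectation over the marginal behavioral policy $\beta(a' \mid s')$. The resulting recursion for $\tilde Q$ is exactly the unregularized C-learning fixed-point equation driven by $\beta(a \mid s)$, so the standard convergence results for C-learning / policy evaluation~\citep{eysenbach2020c} give $\tilde Q(s, a, s_g) \to Q^\beta(s, a, s_g)$. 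Undoing the reparametrization yields the goal-conditioned analogue of Lemma~\ref{lemma:critic-reg}, namely $Q_r^\pi(s, a, s_g) = Q^\beta(s, a, s_g)\, \beta(a \mid s)/\pi(a \mid s, s_g)$.

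Finally I would take logarithms: $\log Q_r^\pi(s, a, s_g) = \log Q^\beta(s, a, s_g) + \log \beta(a \mid s) - \log \pi(a \mid s, s_g)$, and taking the expectation under $\pi(a \mid s, s_g)$ recovers exactly the regularized actor objective (Eq.~\ref{eq:gcrl-policy-regularized}), which is the claimed identity.

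The main obstacle is the bookkeeping in the first step: unlike the single-task case, the positive class of the critic is a mixture of the reward term and the bootstrap term, so one must verify that both pick up the \emph{same} importance ratio $\beta(a \mid s)/\pi(a \mid s, s_g)$ (they do, since both are weighted by $p(s, a) = p(s)\beta(a \mid s)$ while only the negative term is reweighted by $\pi$), and that the cancellation in the second step correctly matches the goal-conditioned next-action policy $\pi(a' \mid s', s_g)$ against the \emph{marginal} behavioral policy $\beta(a' \mid s')$ --- it is precisely this mismatch that makes the regularized critic estimate $Q^\beta$ rather than $Q^\pi$. A minor point to state cleanly is that the importance weights require the support condition $\pi(a \mid s, s_g) > 0 \Rightarrow \beta(a \mid s) > 0$, analogous to the hypothesis of Theorem~\ref{thm:main}.
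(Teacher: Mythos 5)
Your proposal is correct and follows essentially the same three-step argument as the paper's proof: identify the Bayes-optimal classifier of the regularized C-learning objective, reparametrize via $\tilde Q = Q\,\pi/\beta$ so the importance ratio converts the $\pi(a'\mid s', s_g)$-expectation into one over the marginal $\beta(a'\mid s')$, and take logarithms. Your bookkeeping of the reward term (carrying the explicit $1/p_g(s_g)$ normalization) and your explicit statement of the support condition are, if anything, slightly more careful than the paper's own write-up.
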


\begin{proof}
We start by determining the fixed point of critic-regularized C-learning. Like in the single-task setting, the C-learning objective resembles a weighted-classification problem, so we can write down the Bayes' optimal classifier as 
\begin{equation*}
    \frac{Q(s, a, s_g)}{Q(s, a, s_g) + 1} = \frac{((1 - \gamma)p(s' = s_g \mid s, a) + \gamma p(s = s_g)y(s', s_g))\beta(a \mid s)}{((1 - \gamma)p(s' = s_g \mid s, a) + \gamma p(s = s_g)y(s', s_g))\beta(a \mid s) + p(s_g)\pi(a \mid s, s_g)}.
\end{equation*}
Solving for $Q(s, a, s_g)$ on the left hand side, the optimal value for $Q(s, a, s_g)$ is given by
\begin{equation*}
    Q(s, a, s_g) = ((1 - \gamma) p(s' = s_g \mid s, a) + \gamma p(s = s_g)y(s', s_g)) \frac{\beta(a \mid s)}{\pi(a \mid s, s_g)}
\end{equation*}
This tells us what each critic-regularized C-learning update does.

To analyze these updates, we define $\tilde{Q}(s, a, s_g) \triangleq Q(s, a, s_g) \frac{\pi(a \mid s, s_g)}{\beta(a \mid s)}$. Then these updates can be written using $\tilde{Q}(s, a, s_g)$ as
\begin{equation*}
    \tilde{Q}(s, a, s_g)\frac{\beta(a \mid s)}{\pi(a \mid s, s_g)} = \left((1 - \gamma) p(s' = s_g \mid s, a) + \gamma \E_{p(s' \mid s, a) \pi(a' \mid s', s_g)}\left[\tilde{Q}(s', a', s_g)\frac{\beta(a' \mid s')}{\pi(a' \mid s', s_g)} \right] \right)\frac{\beta(a \mid s)}{\pi(a \mid s, s_g)}.
\end{equation*}
These updates can be simplified to
\begin{equation*}
    \tilde{Q}(s, a, s_g) = (1 - \gamma) p(s' = s_g \mid s, a) + \gamma \E_{p(s' \mid s, a) \beta(a' \mid s')}\left[\tilde{Q}(s', a', s_g)\right].
\end{equation*}
Like before, the ratio $\frac{\beta(a' \mid s')}{\pi(a' \mid s', s_g)}$ inside the expectation acts like an importance weight. Thus, the regularized critic updates are equivalent to perform policy evaluation on $\tilde{Q}(s, a, s_g)$. Note that this is estimating the probability that the \emph{average} behavioral policy $\beta(a \mid s)$ reaches goal $s_g$; this is \emph{not} the probability that a goal-directed behavioral policy $\beta(a \mid s, s_g)$ reaches the goal.

Finally, we translate these convergence results for $\tilde{Q}(s, a, s_g)$ into convergence results for $Q(s, a, s_g)$. Written in terms of the original Q-values, we see that the optimal critic for the regularized critic update is
\begin{equation*}
    Q^*(s, a, s_g) = \tilde{Q}^*(s, a, s_g) \frac{\beta(a \mid s)}{\pi(a \mid s, s_g)} = Q^{\beta(\cdot \mid \cdot)}(s, a, s_g) \frac{\beta(a \mid s)}{\pi(a \mid s, s_g)}.
\end{equation*}

Thus, critic regularization implicitly regularizes the actor objective so that it is the same objective as one-step RL:
\begin{align*}
    & \E_{p(s), s_g \sim p(s), \pi(a \mid s, s_g)}\left[ \log Q^*(s, a, s_g) \right] \\
    &= \E_{p(s), s_g \sim p(s), \pi(a \mid s, s_g)}\left[ \log Q^{\beta(\cdot \mid \cdot)}(s, a, s_g) + \log \beta(a \mid s) - \log \pi(a \mid s, s_g) \right].
\end{align*}
\end{proof}

\section{Regularization for Example-based Control Problems}
\label{appendix:examples}

While specifying tasks in terms of reward functions is standard for MDPs, it can be difficult for real-world applications of RL. So, prior work has looked at specifying tasks by goal states (as in the previous section) or sets of states representing good outcomes~\citep{pinto2016supersizing, tung2018reward,  fu2018variational}. In addition to requiring more flexible and user-friend forms of task specification, these algorithms targeted at real-world applications often demand regularization. In the same way that prior goal-conditioned RL algorithms have employed critic regularization, so too have prior example-based control algorithms~\citep{singh2019end, hatch2022example}. In this section, we extend our analysis to regularization of an example-based control algorithm. Again, we will show that a certain form of critic regularization is equivalent to regularizing the actor.

We first define the problem of example-based control~\citep{fu2018variational}. In these problems, the agent is given a small collection of states $s \sim p_e(s)$, which are examples of successful outcomes. The aim is to learn a policy $\pi(a \mid s)$ that maximizes the probability of reaching a success state:
\begin{equation*}
    \max_\pi \E_{p(s_g)}\E_{\pi(\tau \mid s_g)}\left[\sum_{t=0}^\infty \gamma^t p_e(s_t) \right].
\end{equation*}
Note that this objective function is exactly equivalent to a reward-maximization problem, with a reward function $r(s, a) = p_e(s_t)$.

In this section, we show that one-step RL and critic regularization are equivalent for a certain example-based control algorithm. Unlike our analysis in the single-task setting, this analysis here uses an existing method, RCE~\citep{eysenbach2021replacing}. RCE is a TD method that already makes use of the cross entropy loss for training the critic:\begin{align*}
    \max_Q \; & (1 - \gamma) \E_{p_e(s)\beta(a \mid s)}\left[\log \frac{Q(s, a)}{Q(s, a) + 1} \right] + \E_{p(s, a)}\left[ \gamma y^{\pi, Q_t}(s, a)  \log \frac{Q(s, a)}{Q(s, a) + 1} + \log \frac{1}{Q(s, a) + 1} \right],
\end{align*}
where $y^{\pi, Q_t}(s, a) = \E_{p(s' \mid s, a)\pi(a' \mid s')}[Q(s', a')]$ serves the role of the TD target.
The first two terms increase the Q-values while the last term decreases the Q-values. 
The actor is updated to maximize the Q-values. While this objective for the actor can be written in many ways, we will write it as maximizing a log ratio because it will allow us to draw a precise equivalence between actor and critic regularization:
\begin{equation*}
    \max_\pi \E_{p(s)\pi(a \mid s)} \left[ \log Q(s, a) \right]
\end{equation*}
We will now consider variants of RCE that incorporate actor and critic regularization.

\paragraph{One-step RL.}
We will consider a variant of RCE that resembles one-step RL~\citep{brandfonbrener2021offline}. The critic update will be similar to before, but the next-actions sampled for the TD updates will be sampled from the behavioral policy:
\begin{align*}
    \max_Q \; & (1 - \gamma) \E_{p_e(s)\beta(a \mid s)}\left[\log \frac{Q(s, a)}{Q(s, a) + 1} \right] + \E_{p(s, a)}\left[ \gamma y^{\beta, Q_t}(s, a)  \log \frac{Q(s, a)}{Q(s, a) + 1} + \log \frac{1}{Q(s, a) + 1} \right],
\end{align*}
where $y^{\beta, Q_t}(s, a) = \E_{p(s' \mid s, a)\beta(a' \mid s')}[Q(s', a')]$. The actor update will be modified to include a reverse KL divergence:
\begin{equation}
    \max_\pi \E_{p(s), \pi(a \mid s)}\left[\log Q(s, a) + \log \beta(a \mid s) - \pi(a \mid s) \right]. \label{eq:rce-policy-regularized}
\end{equation}

\paragraph{Critic regularization.}

To regularize the critic, we will modify the ``negative'' term in the RCE objective to use actions sampled from the policy:
\begin{align}
    (1 - \gamma) \E_{p_e(s)\beta(a \mid s)}\left[\log \frac{Q(s, a)}{Q(s, a) + 1} \right] + \E_{p(s, a), a^- \sim \pi(\cdot \mid s)}\left[ \gamma y^{\pi, Q_t}(s, a)  \log \frac{Q(s, a)}{Q(s, a) + 1} + \log \frac{1}{Q(s, a^-) + 1} \right], \label{eq:rce-critic-regularized}
\end{align}

\subsection{Analysis for Example-based Control Problems}

Like in the single-task setting, these two forms of regularization yield the same fixed points:
\begin{theorem} \label{thm:rce}
Let policy $\pi(a \mid s)$ be given, let $Q^\beta(s, a)$ be the Q-values for the behavioral policy $\beta(a \mid s)$ and let $Q_r^\pi(s, a)$ be the critic obtained by the regularized critic update (Eq.~\ref{eq:rce-critic-regularized}). Then performing regularized policy updates (Eq.~\ref{eq:rce-policy-regularized}) using the behavioral critic is equivalent to the un-regularized policy objective using the regularized critic:
\begin{equation*}
    \E_{\pi(a \mid s)}\left[\log Q^\beta(s, a) + \log \beta(a \mid s) - \log \pi(a \mid s) \right] = \E_{\pi(a \mid s)}\left[\log Q_r^\pi(s, a)\right]
\end{equation*}
for all states $s$.
\end{theorem}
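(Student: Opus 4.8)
The plan is to mirror the three-step argument used for Theorem~\ref{thm:main} and Theorem~\ref{thm:gcrl}: (i) read off the fixed point of the critic-regularized RCE update in closed form as a Bayes-optimal classifier, (ii) show that a reparametrized critic $\tilde{Q}(s,a) \triangleq Q(s,a)\tfrac{\pi(a \mid s)}{\beta(a \mid s)}$ satisfies exactly the \emph{unregularized} RCE fixed-point equation driven by $\beta$, so $\tilde{Q}$ converges to $Q^\beta$, and (iii) undo the reparametrization and take a logarithm.

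First I would treat the regularized critic loss (Eq.~\ref{eq:rce-critic-regularized}) as a weighted binary classification problem. The positive class is a two-component mixture — the success examples $(s,a) \sim p_e(s)\beta(a \mid s)$ with weight $(1-\gamma)$ together with the bootstrap term $(s,a) \sim p(s,a)$ with weight $\gamma\, y^{\pi,Q_t}(s,a)$ — while the negative class is $(s,a) \sim p(s)\pi(a \mid s)$ with unit weight. Writing the Bayes-optimal classifier $\tfrac{Q}{Q+1}$, solving for $Q$, and using $p(s,a) = p(s)\beta(a \mid s)$ gives
\begin{equation*}
Q(s,a) \;=\; \frac{\beta(a \mid s)}{\pi(a \mid s)}\left( \frac{(1-\gamma)\, p_e(s)}{p(s)} + \gamma\, \E_{p(s' \mid s,a)\pi(a' \mid s')}\!\left[ Q_t(s',a') \right] \right),
\end{equation*}
the analogue of Eq.~\ref{eq:critic-reg-identity}. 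The only new feature relative to the single-task case is that the $(1-\gamma)p_e(s)/p(s)$ ``reward'' term rides along unchanged, since it does not depend on the next state or action.

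Next I would substitute $Q_t(s',a') = \tilde{Q}_t(s',a')\,\tfrac{\beta(a' \mid s')}{\pi(a' \mid s')}$ into the expectation above. The ratio $\beta/\pi$ acts as an importance weight, rewriting $\E_{\pi(a' \mid s')}[\cdot]$ as $\E_{\beta(a' \mid s')}[\cdot]$, and the outer factor $\tfrac{\beta(a \mid s)}{\pi(a \mid s)}$ cancels from both sides, leaving
\begin{equation*}
\tilde{Q}(s,a) \;=\; \frac{(1-\gamma)\, p_e(s)}{p(s)} + \gamma\, \E_{p(s' \mid s,a)\beta(a' \mid s')}\!\left[ \tilde{Q}_t(s',a') \right],
\end{equation*}
which is precisely the unregularized RCE critic update with next actions drawn from $\beta$, i.e. policy evaluation of the behavioral policy. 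Invoking the standard convergence of this TD recursion (as for RCE, and as in Lemma~\ref{lemma:1}), $\tilde{Q}$ converges with fixed point $Q^\beta(s,a)$ by definition. Undoing the reparametrization yields $Q_r^\pi(s,a) = Q^\beta(s,a)\,\tfrac{\beta(a \mid s)}{\pi(a \mid s)}$, so $\log Q_r^\pi(s,a) = \log Q^\beta(s,a) + \log \beta(a \mid s) - \log \pi(a \mid s)$; taking the expectation over $\pi(a \mid s)$ gives the claimed identity and hence equivalence of the two actor objectives.

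The main obstacle I anticipate is bookkeeping rather than conceptual: one must keep the positive class as a genuine mixture so the numerator of the Bayes-optimal classifier is $(1-\gamma)p_e(s)\beta(a \mid s) + \gamma p(s,a)y^{\pi,Q_t}(s,a)$ and not something simpler, and verify that the $(1-\gamma)p_e/p$ term survives the reparametrization intact. As with the other results in this appendix, the argument ignores sampling and function-approximation error and assumes each critic subproblem is solved exactly; a single-sample estimate of the TD target would again introduce a Jensen-gap bias (cf.\ the remark following Lemma~\ref{lemma:1}).
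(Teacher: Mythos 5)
Your proposal is correct and follows essentially the same three-step argument as the paper's own proof of Theorem~\ref{thm:rce}: read off the Bayes-optimal classifier for the weighted classification problem, reparametrize via $\tilde{Q}(s,a) = Q(s,a)\,\pi(a \mid s)/\beta(a \mid s)$ so the importance weight turns the update into the behavioral TD recursion, then undo the reparametrization and take logarithms. The one divergence is that you retain the factor $p_e(s)/p(s)$ in the effective reward (which is what a literal reading of Eq.~\ref{eq:rce-critic-regularized} gives, since the negative term carries density $p(s)\pi(a \mid s)$), whereas the paper writes $(1-\gamma)p_e(s)$ without the $1/p(s)$; this only affects which quantity is being denoted $Q^\beta$, not the claimed equivalence, because the one-step critic's fixed point carries the same factor.
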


\begin{proof}
We start by determining the fixed point of critic-regularized RCE. Like in the single-task setting, The RCE objective resembles a weighted-classification problem, so we can write down the Bayes' optimal classifier as 
\begin{equation*}
    \frac{Q(s, a)}{Q(s, a) + 1} = \frac{((1 - \gamma)p_e(s) + \gamma y^{\pi, Q_t}(s, a))\beta(a \mid s)}{((1 - \gamma)p_e(s) + \gamma y^{\pi, Q_t}(s, a))\beta(a \mid s) + \pi(a \mid s)}.
\end{equation*}
Solving for $Q(s, a)$ on the left hand side, the optimal value for $Q(s, a)$ is given by
\begin{equation*}
    Q(s, a) = ((1 - \gamma) p_e(s) + \gamma y^{\pi, Q_t}(s, a)) \frac{\beta(a \mid s)}{\pi(a \mid s)}
\end{equation*}
This tells us what each critic-regularized RCE update does.

To analyze these updates, we define $\tilde{Q}(s, a) \triangleq Q(s, a) \frac{\pi(a \mid s)}{\beta(a \mid s)}$. Then these updates can be written using $\tilde{Q}(s, a)$ as
\begin{equation*}
    \tilde{Q}(s, a)\frac{\beta(a \mid s)}{\pi(a \mid s)} = \left((1 - \gamma) p_e(s) + \gamma \E_{p(s' \mid s, a) \pi(a' \mid s')}\left[\tilde{Q}(s', a')\frac{\beta(a' \mid s')}{\pi(a' \mid s')} \right] \right)\frac{\beta(a \mid s)}{\pi(a \mid s)}.
\end{equation*}
These updates can be simplified to
\begin{equation*}
    \tilde{Q}(s, a) = (1 - \gamma) p_e(s) + \gamma \E_{p(s' \mid s, a) \beta(a' \mid s')}\left[\tilde{Q}(s', a')\right].
\end{equation*}
Like before, the ratio $\frac{\beta(a' \mid s')}{\pi(a' \mid s')}$ inside the expectation acts like an importance weight. Thus, the regularized critic updates are equivalent to perform policy evaluation on $\tilde{Q}(s, a)$.

Finally, we translate these convergence results for $\tilde{Q}(s, a)$ into convergence results for $Q(s, a)$. Written in terms of the original Q-values, we see that the optimal critic for the regularized critic update is
\begin{equation*}
    Q^*(s, a) = \tilde{Q}^*(s, a) \frac{\beta(a \mid s)}{\pi(a \mid s)} = Q^{\beta}(s, a) \frac{\beta(a \mid s)}{\pi(a \mid s)}.
\end{equation*}

Thus, critic regularization implicitly regularizes the actor objective so that it is the same objective as one-step RL:
 \begin{align*}
    & \E_{p(s), \pi(a \mid s)}\left[ \log Q^*(s, a) \right] = \E_{p(s), \pi(a \mid s)}\left[ \log Q^{\beta}(s, a) + \log \beta(a \mid s) - \log \pi(a \mid s) \right].
\end{align*}
\end{proof}

\section{Additional Experiments}

\subsection{Is classifier actor critic a good model of practically-used RL methods?}
\label{sec:good-model}

To study whether classifier actor critic is an accurate model for practically-used RL methods, we extended Fig.~\ref{fig:cql-sarsa} by adding three additional methods: classifier actor critic, classifier actor critic with actor regularization (Equations~\ref{eq:actor} and~\ref{eq:critic}). Comparing Q-learning with classifier actor critic, we see that both yield reward-maximizing policies (in line with Lemma~\ref{lemma:1}), though these policies are different (i.e., they perform symmetry breaking in different ways). Comparing one-step RL with actor-regularized classifier actor critic, we observe that both methods take the same three actions at the states within the blue box, states where we expect regularization to have a large effect. Outside this blue box, these two methods occasionally take different actions.
Similarly, comparing CQL to critic-regularized classifier actor-critic, we observe that the methods take the same actions within the blue box, but occasionally take different actions outside the blue box. In line with Theorem~\ref{thm:main}, classifier actor-critic with critic regularization produces the exact same policy as classifier actor-critic with actor regularization.
Taken together, these results provide empirical backing for our theoretical results, while also showing that classifier actor critic is only an approximate model of practical algorithms, not a perfect model.

\begin{figure*}[t]
    \centering
    \begin{subfigure}[b]{0.25\textwidth}
        \includegraphics[width=\linewidth]{figures/cql_0_gridworld.pdf}
        \caption{ Q-learning}
    \end{subfigure}
    \hfill
     \begin{subfigure}[b]{0.25\textwidth}
        \includegraphics[width=\linewidth]{figures/sarsa_gridworld.pdf}
        \caption{ One-step RL}
    \end{subfigure}
    \hfill
    \begin{subfigure}[b]{0.25\textwidth}
        \includegraphics[width=\linewidth]{figures/cql_10_gridworld.pdf}
        \caption{ CQL($\lambda=10$)}
    \end{subfigure}
    \begin{subfigure}[b]{0.25\textwidth}
        \includegraphics[width=\linewidth]{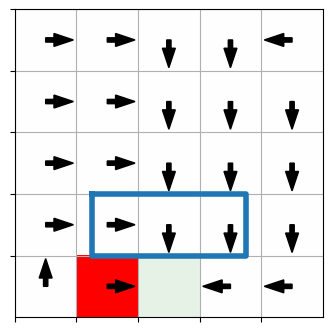}
        \caption{\centering \phantom{....} classifier actor critic \phantom{..........} (no regularization)}
    \end{subfigure}
    \hfill
    \begin{subfigure}[b]{0.25\textwidth}
        \includegraphics[width=\linewidth]{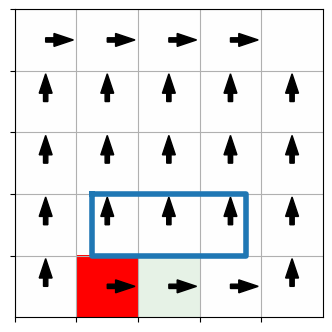}
        \caption{\centering \phantom{...} classifier actor critic \phantom{........} (actor regularization)}
    \end{subfigure}
    \hfill
     \begin{subfigure}[b]{0.25\textwidth}
        \includegraphics[width=\linewidth]{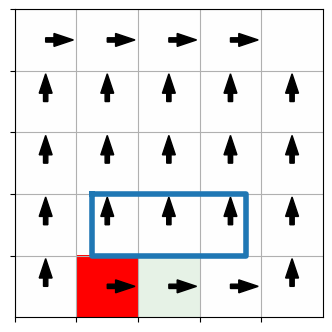}
        \caption{\centering \phantom{..} classifier actor critic \phantom{........} (critic regularization)}
    \end{subfigure}
    \vspace{-0.5em}
    \caption{Extension of Fig.~\ref{fig:cql-sarsa} with additional figures for classifier actor critic and regularized variants. As expected, classifier actor critic produces a reward-maximizing policy (Lemma~\ref{lemma:1}), as does Q-learning. In line with Theorem~\ref{thm:main}, critic-regularized and actor-regularized classifier actor critic product the same policies. 
    Comparing (b) one-step RL (e) actor-regularized classifier actor critic, we see that the methods produce the same three actions within the blue box (states where we expect regularization to be important), but can produce different actions in states outside the blue box. The comparisons between (c) CQL and (d) critic-regularized classifier actor critic are similar, supporting the claim that classifier actor critic is only an approximate (no a perfect) model of practically-used offline RL methods.
    }
    \label{fig:cac}
\end{figure*}

\section{Experimental Details}
\label{appendix:details}

\subsection{Tabular experiments}

\paragraph{Implementing critic regularization for classifier actor critic.}
The objective for critic regularization in contrastive actor critic (Eq.~\ref{eq:critic-regularized}) is nontrivial to optimize because of the cyclic dependency between the policy and the critic: simply alternating between optimizing the actor and the critic does not converge. In our experiments, we update the critic using an exponential moving average of the policy, as proposed in~\citet{wen2021characterizing}.  %
We found that this decision was sufficient for ensuring convergence. When applying CQL in the tabular setting (Figures~\ref{fig:cql-sarsa} and~\ref{fig:histogram}), we did not do this because soft value iteration represents the policy implicitly in terms of the value function.

\paragraph{Fig.~\ref{fig:gridworld} \emph{(left)}} The initial state and goal state are located in opposite corners. The reward function is $+1$ for reaching the goal and 0 otherwise. We use a dataset of 20 trajectories, 50 steps each, collected by a random policy. We use $\gamma = 0.95$ and train for 20k full-batch updates, using a learning rate of 1e-2. The Q table is randomly initialized using a standard normal distribution.

\paragraph{Fig.~\ref{fig:gridworld} \emph{(center)}} The initial state and goal state are located in adjacent corners. The goal state has a reward of +3.5, the states between the initial state and goal state have a reward +1, and all other states (including the initial state) have a reward of +2. We use a dataset of 20 trajectories, 50 steps each, collected by a random policy. We use $\gamma = 0.95$ and train for 20k full-batch updates, using a learning rate of 1e-2. The Q table is randomly initialized using a standard normal distribution.

\paragraph{Fig.~\ref{fig:gridworld} \emph{(right)}} The initial state and goal state are located in adjacent corners. The reward is +0.01 at the goal state and 0 otherwise. We use a dataset of 1 trajectories with 10 steps, which traces the following path: 
\begin{equation*}
[(0, 0), (1, 0), (1, 1), (1, 2), (1, 3), (1, 4), (0, 4), (0, 4), (0, 4), (0, 4)].
\end{equation*}
\!We use $\gamma = 0.95$ and train for 10k full-batch updates, using a learning rate of 1e-2. The Q table is randomly initialized using a standard normal distribution.

\paragraph{Fig.~\ref{fig:cql-sarsa}} There is a bad state (reward of $-10$) next to the optimal state (reward of $+1$), so the behavioral policy navigates away from the optimal state.  We generate 10 trajectories of length 100 from a uniform random policy. We use $\gamma = 0.95$ and train each method for 10k full-batch updates. The Q table is randomly initialized using a standard normal distribution. One-step RL performs SARSA updates while CQL performs soft value iteration (as suggested in the CQL paper).

\paragraph{Fig.~\ref{fig:histogram}} We generate 100 random variants of Fig.~\ref{fig:cql-sarsa} by randomly sampling the high-reward state and low-reward state (without replacement). The datasets are generated in the same way.

\paragraph{Fig.~\ref{fig:when}}
We use the same environment and dataset as in Fig.~\ref{fig:cql-sarsa}, but train the CQL agent with varying values of $\lambda$, each with 5 random seeds. We train the one-step RL agent for 5 random seeds. For each point on the X axis of Fig.~\ref{fig:when}, we compare compute $5 \times 5$ pairwise comparisons and report the mean and standard deviation.

\subsection{Continuous control experiments}
For the experiments in Figures~\ref{fig:q_vals} and~\ref{fig:mse-loss}, we used the implementation of one-step RL (reverse KL) and CQL provided by~\citet{hoffman2020acme}. We choose this implementation because it is well tuned and uses similar hyperparameters for the two methods. As mentioned in the main text, the only change we made to the implementation was adding the twin-Q trick to one-step RL, such that it matched the critic architecture used by CQL. We did not change any of the other hyperparameters, including hyperparameters controlling the regularization strength.

\end{document}